\useunder{\uline}{\ul}{}
\newtheorem{thm}{Theorem}   
\DeclareFixedFont{\ttb}{T1}{txtt}{bx}{n}{12} 
\DeclareFixedFont{\ttm}{T1}{txtt}{m}{n}{12}  
\definecolor{deepblue}{rgb}{0,0,0.5}
\definecolor{deepred}{rgb}{0.6,0,0}
\definecolor{deepgreen}{rgb}{0,0.5,0}
\definecolor{codegreen}{rgb}{0,0.6,0}
\definecolor{codegray}{rgb}{0.5,0.5,0.5}
\definecolor{codepurple}{rgb}{0.58,0,0.82}
\definecolor{backcolour}{rgb}{0.95,0.95,0.92}
\definecolor{codeblue}{rgb}{0.25,0.5,0.5}
\tiny\color{codegray},
\newcommand{\mixup}{\textit{mixup}}
\newcommand{\nmixup}{$\zeta$-\textit{mixup}}
\newcommand{\pvar}{\gamma}
\newcommand{\pvalmin}{1.72865}
\newcommand{\pvarmin}{\pvar_{\mathrm{min}}}
\newcommand{\rise}[1]{\textcolor{gray}{\textsubscript{$+$#1}}}
\newcommand{\Rise}[1]{\textcolor{Green}{\xspace\textsubscript{\bf $+$#1}}}
\def\ie{i.e.,~}
\def\eg{e.g.,~}
\def\etal{et al.}
\def\fig{Fig. }
\def\eqn{Eqn. }
\title{Multi-Sample \nmixup: Richer, More Realistic Synthetic Samples from a $p$-Series Interpolant}
\author{
  Kumar Abhishek$^1$, Colin J.~Brown$^2$, Ghassan Hamarneh$^1$\\
  $^1$School of Computer Science, Simon Fraser University, Canada\\
  \texttt{\{kabhishe, hamarneh\}@sfu.ca}\\
  $^2$Hinge Health, Canada\\ 
  \texttt{colin.brown@hingehealth.com}
}
\begin{document}

\maketitle

\begin{abstract}
  Modern deep learning training procedures rely on model regularization techniques such as data augmentation methods, which generate training samples that increase the diversity of data and richness of label information. A popular recent method, \mixup, uses convex combinations of pairs of original samples to generate new samples. However, as we show in our experiments, \mixup~can produce undesirable synthetic samples, where the data is sampled off the manifold and can contain incorrect labels. We propose \nmixup, a generalization of \mixup~with provably and demonstrably desirable properties that allows convex combinations of $N \geq 2$ samples, leading to more realistic and diverse outputs that incorporate information from $N$ original samples by using a $p$-series interpolant. We show that, compared to \mixup, \nmixup~better preserves the intrinsic dimensionality of the original datasets, which is a desirable property for training generalizable models. Furthermore, we show that our implementation of \nmixup~is faster than \mixup
, and extensive evaluation on controlled synthetic and 24 real-world natural and medical image classification datasets shows that \nmixup~outperforms \mixup~and traditional data augmentation techniques.
\end{abstract}

\section{Introduction}

Deep learning-based techniques have demonstrated unprecedented performance improvements over the last decade in a wide range of tasks, including but not limited to image classification, segmentation, and detection, speech recognition, natural language processing, and graph processing~\cite{schmidhuber2015deep,lecun2015deep,alom2018history,wu2020comprehensive}. 
These deep neural networks (DNNs) have a large number of parameters, often in the tens to hundreds of millions, and
training accurate, robust, and generalizable models has largely been possible because of large public datasets~\cite{deng2009imagenet,lin2014microsoft,cordts2016cityscapes}, efficient training methods~\cite{rumelhart1986learning,stanley2002evolving}, hardware-accelerated training~\cite{steinkraus2005using,chellapilla2006high,raina2009large,ciresan2010deep}, advances in network architecture design~\cite{simonyan2014very,ronneberger2015u,he2016deep}, advanced optimizers~\cite{duchi2011adaptive,zeiler2012adadelta,kingma2014adam,dozat2016incorporating}, new regularization layers~\cite{srivastava2014dropout,ioffe2015batch}, and other novel regularization techniques. While techniques such as weight decay~\cite{krogh1991simple}, dropout~\cite{srivastava2014dropout}, batch normalization~\cite{ioffe2015batch}, and stochastic depth~\cite{huang2016deep} can be considered as ``data independent" regularization schemes~\cite{guo2019mixup}, popular ``data dependent" regularization approaches include data augmentation~\cite{lecun1998gradient,ciresan2012multi,krizhevsky2012imagenet,zeiler2014visualizing,simonyan2014very} and adversarial training~\cite{goodfellow2014explaining,bai2021recent}.

\begin{figure*}[ht!]
    \centering
    \includegraphics[width=0.8\textwidth]{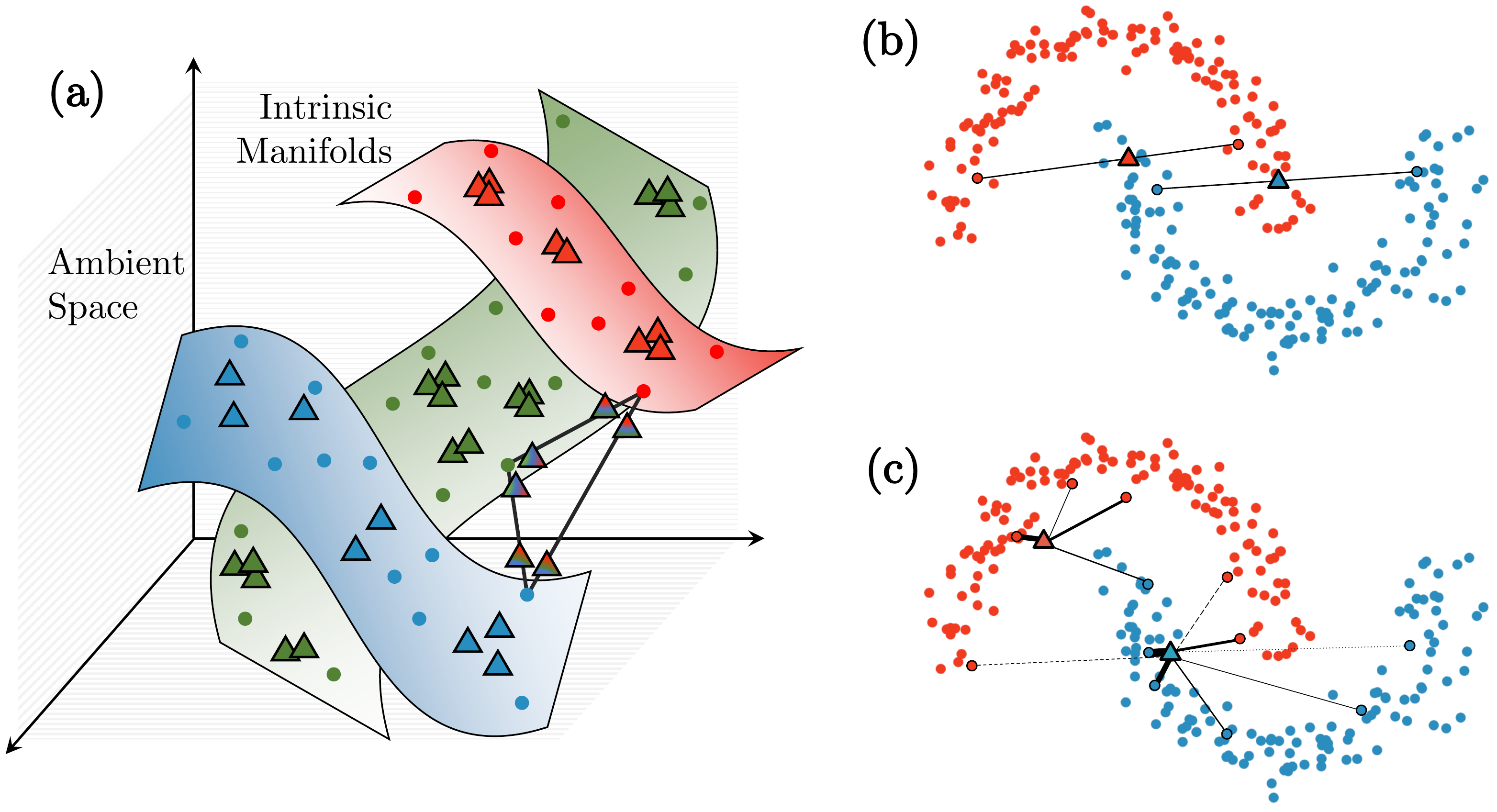}
    \caption{Overview of \mixup~(b) and \nmixup~(a, c). The original and synthesized samples are denoted by $\circ$ and $\bigtriangleup$ respectively, and line segments indicate which original samples were used to create the new ones. The line thicknesses denote the relative weights assigned to original samples. Observe how \nmixup~can mix any number of samples (\eg 3 in (a) and 4 or 8 in (c)), and that \nmixup's formulation allows the generated samples to be close to the original distribution while still incorporating rich information from several samples.}
        \label{fig:overview}
\end{figure*}

Given the large parameter space of deep learning models, training on small datasets tends to cause the models to overfit to the training samples. This is especially a problem when training with data from high dimensional input spaces, such as images, because the sampling density is exponentially proportional to $1/\mathcal{D}$, where $\mathcal{D}$ is the dimensionality of the input space~\cite{hastie2009elements}. As $\mathcal{D}$ grows larger (typically $10^4$ to $10^6$ for most real-world image datasets), we need to increase the number of samples exponentially in order to retain the same sampling density. As a result, it is imperative that the training datasets for these models have a sufficiently large number of samples in order to prevent overfitting. Moreover, deep learning models generally exhibit good generalization performance when evaluated on samples that come from a distribution similar to the training samples' distribution. In addition to their regularization effects to prevent overfitting~\cite{hernandez2018further,hernandez2018data}, data augmentation techniques also help the training by synthesizing more samples in order to better learn the training distributions.

Traditional image data augmentation techniques include geometric- and inte-nsity-based transformations, such as affine transformations, rotation, scaling, zooming, cropping, adding noise, etc., and are quite popular in the deep learning literature. For a comprehensive review of data augmentation techniques for deep learning methods on images, we refer the interested readers to the survey by Shorten \etal~\cite{shorten2019survey}. In this paper, we focus on a recent and popular data augmentation technique based on a rather simple idea, which generates a convex combination of a pair of input samples, variations of which are presented as \mixup~\cite{zhang2018mixup}, Between-Class learning~\cite{tokozume2018between}, and SamplePairing~\cite{inoue2018data}. The most popular of these approaches,~\mixup~\cite{zhang2018mixup}, performs data augmentation by generating new training samples from convex combinations of pairs of original samples and linear interpolations of their corresponding labels, leading to new training samples, which are obtained by essentially overlaying 2 images with different transparencies, and new training labels, which are soft probabilistic labels. Other related augmentation methods can broadly be grouped into 3 categories: (a) methods that crop or mask region(s) of the original input image followed by \mixup~like blending, \eg CutMix~\cite{yun2019cutmix} and GridMix~\cite{baek2021gridmix}, (b) methods that generate convex combinations in the learned feature space, \eg \textit{Manifold Mixup} \cite{verma2019manifold} and MixFeat~\cite{yaguchi2019mixfeat}, and (c) methods that add a learnable component to \mixup, \eg AdaMixUp~\cite{guo2019mixup}, AutoMix~\cite{zhu2020automix}, and AutoMix~\cite{liu2021unveiling}. However, \mixup~can lead to ghosting artifacts in the synthesized samples (as we show later in the paper, \eg \fig\ref{fig:mnist_vis}), in addition to generating synthetic samples with wrong class labels. Moreover, because \mixup~uses a convex combination of only a pair of points, it can lead to the synthetic samples being generated off the original data manifold (\fig\ref{fig:overview} (a)). This in turn leads to an inflation of the manifold, which can be quantified by an increase in the intrinsic dimensionality of the resulting data distribution, as shown in \fig\ref{fig:localID}, which is undesirable since it has been shown that deep models trained on datasets with lower dimensionalities generalize better to unseen samples~\cite{pope2021the}. Additionally, \mixup-like approaches, which crop or mask regions of the input images, may degrade the training data quality by occluding informative and discriminatory regions of images, which is highly undesirable for high-stakes applications such as medical image analysis tasks.

The primary hypothesis of \mixup~and many of its derivatives is that a model should behave linearly between any two training samples, even if the distance between samples is large. This implies that we may train the model with synthetic samples that have very low confidence of realism; in effect over-regularizing. We instead argue that a model should only behave linearly nearby training samples and that we should thus only generate synthetic examples with high confidence of realism. To achieve this, we propose \nmixup, a generalization of \mixup~with provably desirable properties that addresses the shortcomings of \mixup. 
\nmixup~generates new training samples by using a convex combination of $N$ samples in a training batch, requires no custom layers or special training procedures to employ, and is faster than \mixup~in terms of wall-clock time.
We show how, as compared to \mixup, the \nmixup~formulation allows for generating more realistic and more diverse samples that better conform to the data manifold (\fig\ref{fig:overview} (b)) with richer labels that incorporate information from multiple classes, and that \mixup~is indeed a special case of \nmixup. We show qualitatively and quantitatively on synthetic and real-world datasets that \nmixup's output better preserves the intrinsic dimensionality of the data than that of \mixup. Finally, we demonstrate the efficacy of \nmixup~on 24 datasets comprising a wide variety of tasks from natural image classification to diagnosis with several medical imaging modalities.

\section{Method}    \label{sec:method}

\textbf{Vicinal Risk Minimization:} Revisiting the concept of risk minimization from Vapnik~\cite{vapnik1999nature}, given $\mathcal{X}$ and $\mathcal{Y}$ as the input data and the target label distributions respectively, and a family of functions $\mathcal{F}$, the supervised learning setting consists of searching for an optimal function $f \in \mathcal{F}: \mathcal{X} \to \mathcal{Y}$, which minimizes the expected value of a given loss function $\mathcal{L}$ over the data distribution $P(x, y); (x, y) \in (\mathcal{X}, \mathcal{Y})$. This expected value of the loss, also known as the expected value of the risk, is given by:
$    R (f) = \int \mathcal{L} \left(f(x), y\right) \ P(x, y) \ dx \ dy.$
In scenarios when the exact distribution $P(x, y)$ is unknown, such as in practical supervised learning settings with a finite training dataset $\{x_i, y_i\}_{i=1}^m$, the common approach is to minimize the risk w.r.t. the empirical data distribution approximated by using delta functions at each sample,
$    R_{\mathrm{emp}} (f) = \frac{1}{m} \sum_{i=1}^m \mathcal{L} \left(f(x_i), y_i\right),$
and this is known as empirical risk minimization (ERM). However, if the data distribution is smooth, as is the case with most real datasets, it is desirable to minimize the risk in the vicinity of the provided samples~\cite{vapnik1999nature,chapelle2001vicinal},
$    R_{\mathrm{vic}} (f) = \frac{1}{\hat{m}} \sum_{i=1}^{\hat{m}} \mathcal{L} \left(f(\hat{x_i}), \hat{y_i}\right)$,
where $\left\{(\hat{x}, \hat{y})\right\}_{i=1}^{\hat{m}}$ are points sampled from the vicinity of the original data distribution, also known as the vicinal distribution $P_{\mathrm{vic}} (x, y)$. This is known as vicinal risk minimization (VRM) and theoretical analysis~\cite{vapnik1999nature,chapelle2001vicinal,zhang2018generalization} has shown that VRM generalizes well when at least one of these two criteria are satisfied: (i) the vicinal data distribution $P_{\mathrm{vic}} (x, y)$ must be a good approximation of the actual data distribution $P (x, y)$, and (ii) the class $\mathcal{F}$ of functions must have a suitably small capacity.
Since modern deep neural networks have up to hundreds of millions of parameters, it is imperative that the former criteria is met.\\
\\
\noindent \textbf{Data Augmentation:} A popular example of VRM is the use of data augmentation for training deep neural networks. For example, applying geometric and intensity-based transformations to images leads to a diverse training dataset allowing the prediction models to generalize well to unseen samples~\cite{shorten2019survey}. However, the assumption of these transformations that points sampled in the vicinity of the original data distribution share the same class label is rather limiting and does not account for complex interactions (\eg proximity relationships) between class-specific data distributions in the input space. Recent approaches based on convex combinations of pairs of samples to synthesize new training samples aim to alleviate this by allowing the model to learn smoother decision boundaries~\cite{verma2019manifold}. Consider the general $\mathcal{K}$-class classification task. \mixup~\cite{zhang2018mixup} synthesizes a new training sample $(\hat{x}, \hat{y})$ from training data samples $(x_i, y_i)$ and $(x_j, y_j)$ as

\begin{equation} \label{eqn:mixup}
    \hat{x} = \lambda x_i + (1 - \lambda) x_j; \ \ \hat{y} = \lambda y_i + (1 - \lambda) y_j
\end{equation}

\noindent where $\lambda \in [0, 1]$. The labels $y_i$, $y_j$ are converted to one-hot encoded vectors to allow for linear interpolation between pairs of labels. However, as we show in our experiments (Sec.~\ref{sec:results}), \mixup~leads to the synthesized points being sampled off the data manifold (\fig\ref{fig:overview} (a)).\\

\noindent\textbf{\nmixup~Formulation:} Going back to the $\mathcal{K}$-class classification task, suppose we are given a set of $N$ points $\{x_i\}_{i=1}^N$ in a $\mathcal{D}$-dimensional ambient space $\mathbb{R^{\mathcal{D}}}$ with the corresponding labels $\{y_i\}_{i=1}^m$ in a label space $\mathcal{L} = \{l_1, \cdots, l_\mathcal{K}\} \in \mathbb{R^{\mathcal{K}}}$. Keeping in line with the manifold hypothesis~\cite{cayton2005algorithms,fefferman2016testing}, which states that complex data manifolds in high dimensional ambient spaces are actually made up of samples from manifolds with low intrinsic dimensionalities, we assume that the $N$ points are samples from $\mathcal{K}$ manifolds $\{\mathcal{M}_i\}_{i=1}^{\mathcal{K}}$ of intrinsic dimensionalities $\{d_i\}_{i=1}^{\mathcal{K}}$, where $d_i << D \ \forall i \in [1, \mathcal{K}]$ (\fig\ref{fig:overview} (a)). We seek an augmentation method that facilitates a denser sampling of each intrinsic manifold $\mathcal{M}_i$, thus generating more real and more diverse samples with richer labels. Following Wood \etal~\cite{wood2021fake,wood2021synthetic}, we consider three criteria for evaluating the quality of synthetic data: 

\noindent \textbf{(i)~realism}: allowing the generation of correctly labeled synthetic samples close to the original samples, ensuring the realism of the synthetic samples,

\noindent\textbf{(ii)~diversity}: 
facilitating the generation of more diverse synthetic samples by allowing exploration of the input space, and 

\noindent\textbf{(iii)}~\textbf{label richness} when generating synthetic samples
while still staying on the manifold of realistic samples.  
Additionally, we aim for: 

\noindent\textbf{(iv)~valid probabilistic labels} from combinations of samples along with

\noindent\textbf{(v)~computationally efficient} (\eg avoiding inter-sample distance calculations) augmentation of training batches.

To this end, we propose to synthesize a new sample $(x_k, y_k)$ as

\begin{equation} \label{eqn:nmixup}
    x_k = \sum_{i=1}^N w_i x_i; \ \ y_k = \sum_{i=1}^N w_i y_i,
\end{equation}

\noindent where $w_i$s are the weights assigned to the $N$ samples. One such weighting scheme that satisfies the aforementioned requirements consists of sample weights from the terms of a $p$-series, \ie $w_i = i^{-p}$, which is a convergent series for $p \ge 1$. Since this implies that the weight assigned to the first sample will be the largest, we want to randomize the order of the samples to ensure that the synthetic samples are not all generated near one original sample. Therefore, building upon the idea of local synthetic instances initially proposed for the augmentation of connectome dataset~\cite{brown2015prediction}, we adopt the following formulation: Given $N$ samples (where $2 \leq N \leq m$ and thus, theoretically, the entire dataset), an $N \times N$ random permutation matrix $\pi$, and the resulting randomized ordering of samples $s = \pi [1, 2, \dots, N]^T$, the weights are defined as

\begin{equation}
    w_i = \frac{s_i^{-\pvar}}{C}, \ \ i \in [1, N]
\end{equation}

\noindent where $C$ is the normalization constant and $\pvar$ is a hyperparameter. As we show in our experiments later, $\pvar$ allows us to control how far the synthetic samples can stray away from the original samples. Moreover, in order to ensure that $y_k$ in \eqn\ref{eqn:nmixup} is a valid probabilistic label, $w_i$ must satisfy $w_i \ge 0 \ \forall i$ and $\sum_{i=1}^N w_i = 1$.
Accordingly, we use $L_1$-normalization and $C = \sum_{j=1}^N j^{-\pvar}$ 
is the $N$-truncated Riemann zeta function~\cite{riemann1859ueber} $\zeta(z)$ evaluated at $z=\pvar$, and call our method \nmixup.
An illustration of \nmixup~for $N=3, \mathcal{D}=3, d_1 = d_2 = d_3 =2$ is shown in \fig\ref{fig:overview}(a). Notice how despite generating convex combinations of samples from disjoint manifolds, the resulting synthetic samples are close to the original ones. A similar observation can be made for $N=4$ and $N=8$ is shown in \fig\ref{fig:overview}(c). Since there exist $N!$ possible $N \times N$ random permutation matrices, given $N$ original samples, \nmixup~can synthesize $N!$ new samples for a single value of $\pvar$, as compared to \mixup~which can only synthesize 1 new sample per sample pair for a single value of $\lambda$. 

As a result of the aforementioned formulation, \nmixup~presents two desirable properties that we present in the following 2 theorems (proofs in the Appendix). Theorem~\ref{theorem:pvalmin} states that for all values of $\pvar \geq \pvarmin$, the weight assigned to one sample is greater than the sum of the weights assigned to all the other samples in a batch, thus implicitly introducing the desired notion of linearity in only the locality of the original samples. Theorem~\ref{theorem:special_case} states the equivalence of \mixup~and \nmixup~and establishes the former as a special case of the latter.

\begin{thm} \label{theorem:pvalmin}
For $\gamma \geq \pvarmin = \pvalmin$, the weight assigned to one sample dominates all other weights, i.e., $\forall \ \gamma \geq \pvalmin$, $w_1 > \sum_{i=2}^N w_i$.
\end{thm}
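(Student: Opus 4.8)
The plan is to collapse the inequality into a single statement about the $N$-truncated Riemann zeta function $C=\sum_{j=1}^{N}j^{-\gamma}$ and then bound that quantity uniformly in $N$. First I would note that, by construction, the largest of the weights $w_i=s_i^{-\gamma}/C$ is the one whose permuted rank $s_i$ equals $1$; relabelling so that this dominant weight is $w_1$ (equivalently, assuming without loss of generality that $s_1=1$), we get $w_1=1/C$ and $\sum_{i=2}^{N}w_i=(C-1)/C$. Hence $w_1>\sum_{i=2}^{N}w_i$ is equivalent to $1>C-1$, i.e.\ to $C<2$. So the entire theorem reduces to the single inequality $\sum_{j=1}^{N}j^{-\gamma}<2$. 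As a sanity check, for $N=2$ this is just $1+2^{-\gamma}<2$, which already holds for all $\gamma>0$; and the case $N=1$ is excluded by the standing assumption $N\ge 2$.

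Next I would remove the dependence on $N$. For fixed $\gamma>1$ the partial sums $\zeta_N(\gamma):=\sum_{j=1}^{N}j^{-\gamma}$ are strictly increasing in $N$ and converge to $\zeta(\gamma)$, so for every finite $N\ge 2$ we have $C=\zeta_N(\gamma)<\zeta(\gamma)$. It therefore suffices to show $\zeta(\gamma)\le 2$ whenever $\gamma\ge\pvarmin$; the \emph{strict} bound $C<2$ then follows for free from $\zeta_N(\gamma)<\zeta(\gamma)$, including at the endpoint $\gamma=\pvarmin$ where $\zeta(\pvarmin)=2$ exactly.

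Finally I would invoke monotonicity of $\zeta$. Each term $j^{-\gamma}$ with $j\ge 2$ is strictly decreasing in $\gamma$, so $\zeta$ is continuous and strictly decreasing on $(1,\infty)$, with $\zeta(\gamma)\to\infty$ as $\gamma\to 1^{+}$ and $\zeta(\gamma)\to 1$ as $\gamma\to\infty$. By the intermediate value theorem there is a unique $\gamma^{\ast}\in(1,\infty)$ with $\zeta(\gamma^{\ast})=2$; numerically $\gamma^{\ast}=\pvalmin\ldots$, which is exactly the definition of $\pvarmin=\pvalmin$. Then for $\gamma\ge\pvarmin$, monotonicity gives $\zeta(\gamma)\le\zeta(\pvarmin)=2$, hence $C=\zeta_N(\gamma)<\zeta(\gamma)\le 2$, which is the claimed inequality.

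The only genuine obstacle is cosmetic rather than mathematical: the sharp threshold $\pvarmin$ has no closed form, so it must be defined implicitly as the root of $\zeta(\gamma)=2$ and its decimal value $\pvalmin$ quoted from a numerical root-finding. (A fully self-contained but weaker route that avoids $\zeta$ altogether is the integral comparison $\sum_{j=2}^{N}j^{-\gamma}<\int_{1}^{\infty}x^{-\gamma}\,dx=1/(\gamma-1)$, giving $C<1+1/(\gamma-1)$ and hence the conclusion for all $\gamma\ge 2$; replacing this crude bound by the exact value of $\zeta$ is precisely what sharpens the constant from $2$ down to $\pvalmin$.)
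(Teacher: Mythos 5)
Your proof is correct and follows essentially the same route as the paper's: both reduce the claim to the single inequality $\sum_{j=1}^{N} j^{-\gamma} < 2$, pass to the full Riemann zeta function $\zeta(\gamma)$ as an upper bound on the truncated sum, and identify $\gamma_{\mathrm{min}} = 1.72865$ numerically as the root of $\zeta(\gamma) = 2$. If anything, your write-up is slightly more careful than the paper's, since you make the monotonicity of $\zeta$ explicit and handle the endpoint $\gamma = \gamma_{\mathrm{min}}$ (where $\zeta(\gamma_{\mathrm{min}}) = 2$ exactly) via the strict inequality $\zeta_N(\gamma) < \zeta(\gamma)$.
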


\begin{thm} \label{theorem:special_case}
For $N = 2$ and $\pvar= \log_2 \left(\frac{\lambda}{1-\lambda}\right)$, 
\nmixup~simplifies to \mixup.
\end{thm}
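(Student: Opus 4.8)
The plan is to simply specialize the general \nmixup~weighting to $N=2$ and match it term-by-term with \eqn\ref{eqn:mixup}. First I would evaluate the normalization constant at $N=2$: $C = \sum_{j=1}^{2} j^{-\gamma} = 1 + 2^{-\gamma}$. A $2\times 2$ permutation matrix $\pi$ yields one of only two orderings, $s = (1,2)^T$ or $s = (2,1)^T$, so in either case the resulting weight pair is $\left\{\tfrac{1}{1+2^{-\gamma}},\, \tfrac{2^{-\gamma}}{1+2^{-\gamma}}\right\}$, assigned to the two samples in one order or the other.

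Next I would set $\lambda := \tfrac{1}{1+2^{-\gamma}}$, so that automatically $1-\lambda = \tfrac{2^{-\gamma}}{1+2^{-\gamma}}$, with $w_i \ge 0$ and $\sum_i w_i = 1$, and substitute into \eqn\ref{eqn:nmixup} to obtain exactly $x_k = \lambda x_i + (1-\lambda)x_j$ and $y_k = \lambda y_i + (1-\lambda)y_j$, i.e.\ \eqn\ref{eqn:mixup}. The two choices of permutation correspond precisely to the relabeling $i \leftrightarrow j$, $\lambda \leftrightarrow 1-\lambda$, under which \mixup~is itself symmetric, so nothing is lost. Finally, inverting $\lambda = \tfrac{1}{1+2^{-\gamma}}$ gives $2^{-\gamma} = \tfrac{1-\lambda}{\lambda}$, hence $\gamma = \log_2\!\left(\tfrac{\lambda}{1-\lambda}\right)$, as claimed.

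I would close with a remark on the parameter ranges: the map $\lambda \mapsto \log_2(\lambda/(1-\lambda))$ is a bijection from $(0,1)$ onto $\mathbb{R}$, so every non-degenerate \mixup~coefficient is realized by some real $\gamma$ and vice versa; the endpoints $\lambda \in \{0,1\}$, for which \mixup~degenerates to copying a single sample, correspond to the limits $\gamma \to \mp\infty$. Note that here $\gamma$ need not satisfy the convergence constraint $\gamma \ge 1$ invoked elsewhere in the paper, since for the fixed finite value $N=2$ the truncated zeta sum $1 + 2^{-\gamma}$ is finite (and positive) for every $\gamma \in \mathbb{R}$.

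There is essentially no hard step here — the result is a one-line substitution once the $N=2$ weights are written out explicitly. The only points requiring care are the bookkeeping of the permutation matrix (showing it merely swaps the two weights, which is matched by \mixup's own symmetry under $\lambda \leftrightarrow 1-\lambda$) and stating the $\lambda$–$\gamma$ correspondence over the correct domain; I would present these as short explicit remarks rather than as a computation.
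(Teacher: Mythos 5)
Your proof is correct and follows essentially the same route as the paper's: write out the $N=2$ weights with $C = 1 + 2^{-\gamma}$, identify $w_1 = \lambda$, and invert to get $\gamma = \log_2\left(\frac{\lambda}{1-\lambda}\right)$. Your added remarks on the permutation merely swapping the two weights and on the $\lambda$--$\gamma$ bijection over $(0,1) \to \mathbb{R}$ are correct and slightly more careful than the paper's version, but do not change the argument.
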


\begin{figure*}[ht!]
     \centering
     \begin{subfigure}[t]{0.9\textwidth}
         \centering
         \includegraphics[width=\textwidth]{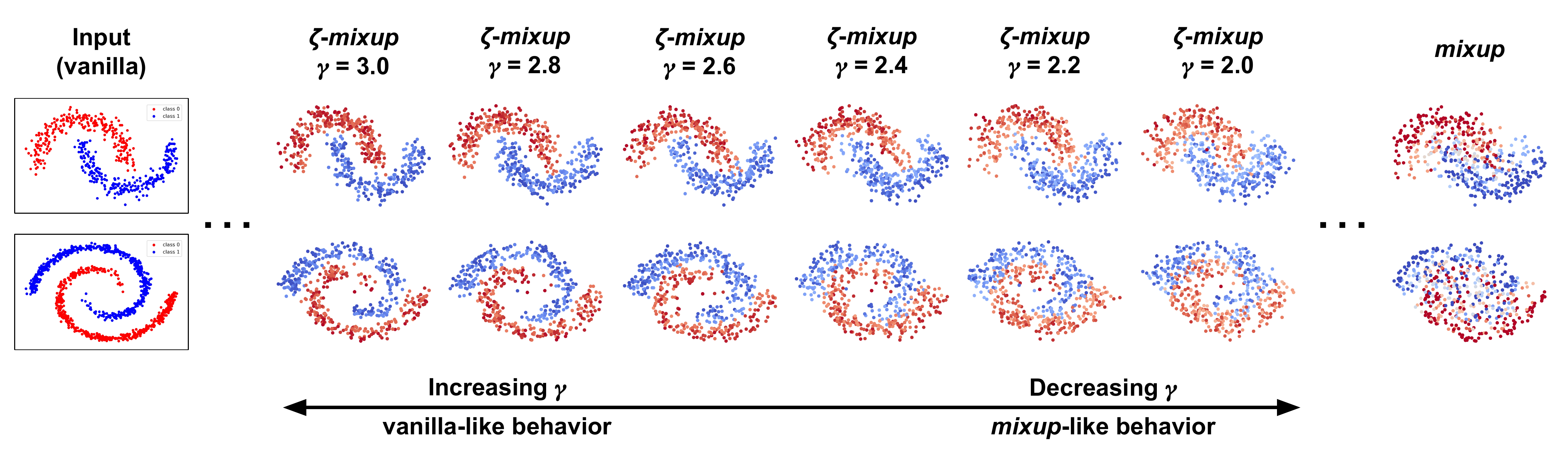}
         \caption{Synthetic two-class 2D data with non-linear class decision boundaries.}
     \end{subfigure}
     \\
      \begin{subfigure}[b]{0.55\textwidth}
         \centering
         \includegraphics[width=\textwidth]{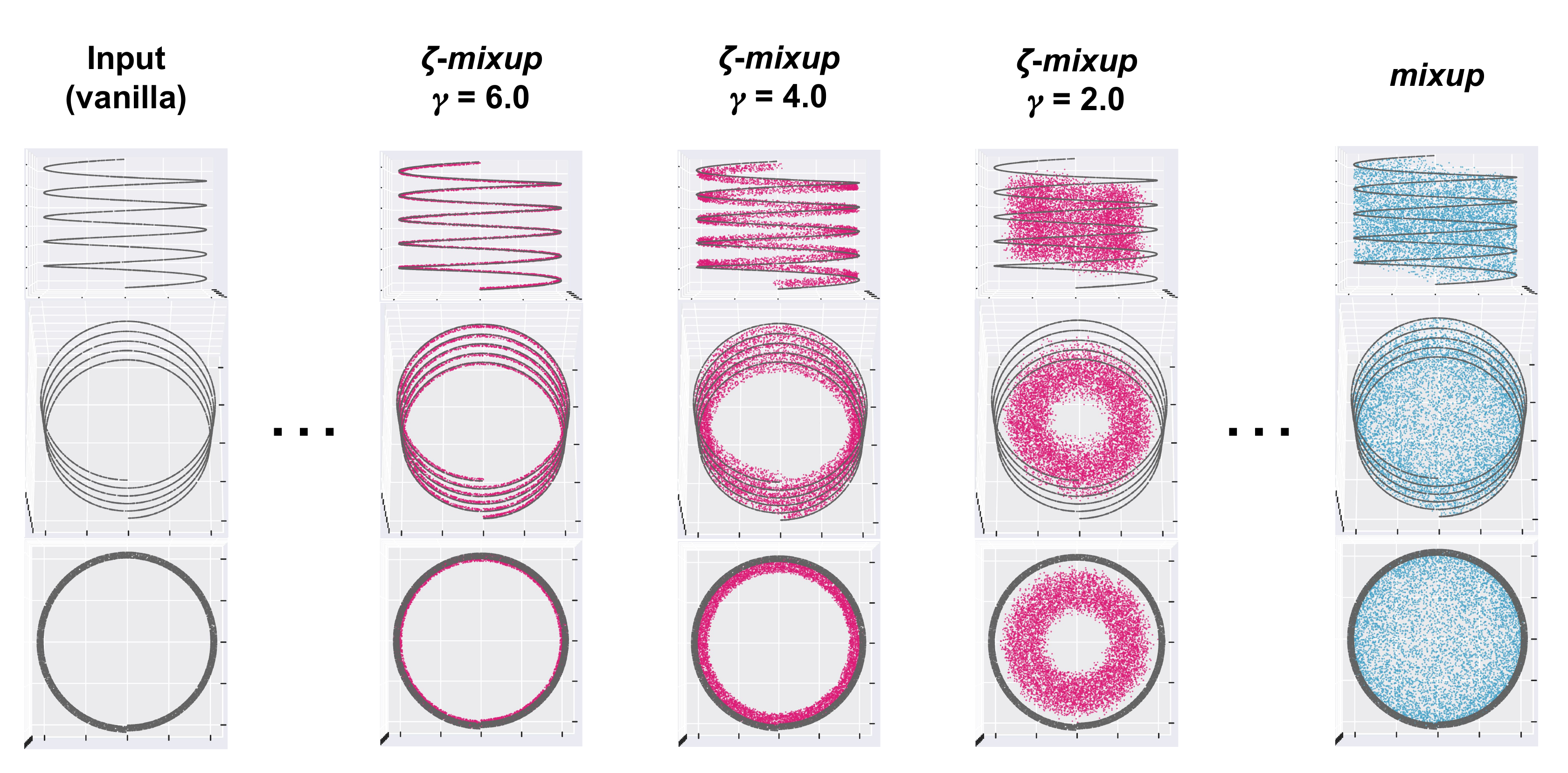}
         \caption{Synthetic data distributed along a 3D helical manifold. 2D projections of the 3D manifolds are shown from the following viewpoints top to bottom: (elevation, azimuth): (0\textdegree, 0\textdegree), (70\textdegree, 0\textdegree), (90\textdegree, 0\textdegree). In all the plots, the grey points denote the original input samples.}
     \end{subfigure}
     \hfill
      \begin{subfigure}[b]{0.4\textwidth}
         \centering
         \includegraphics[width=\textwidth]{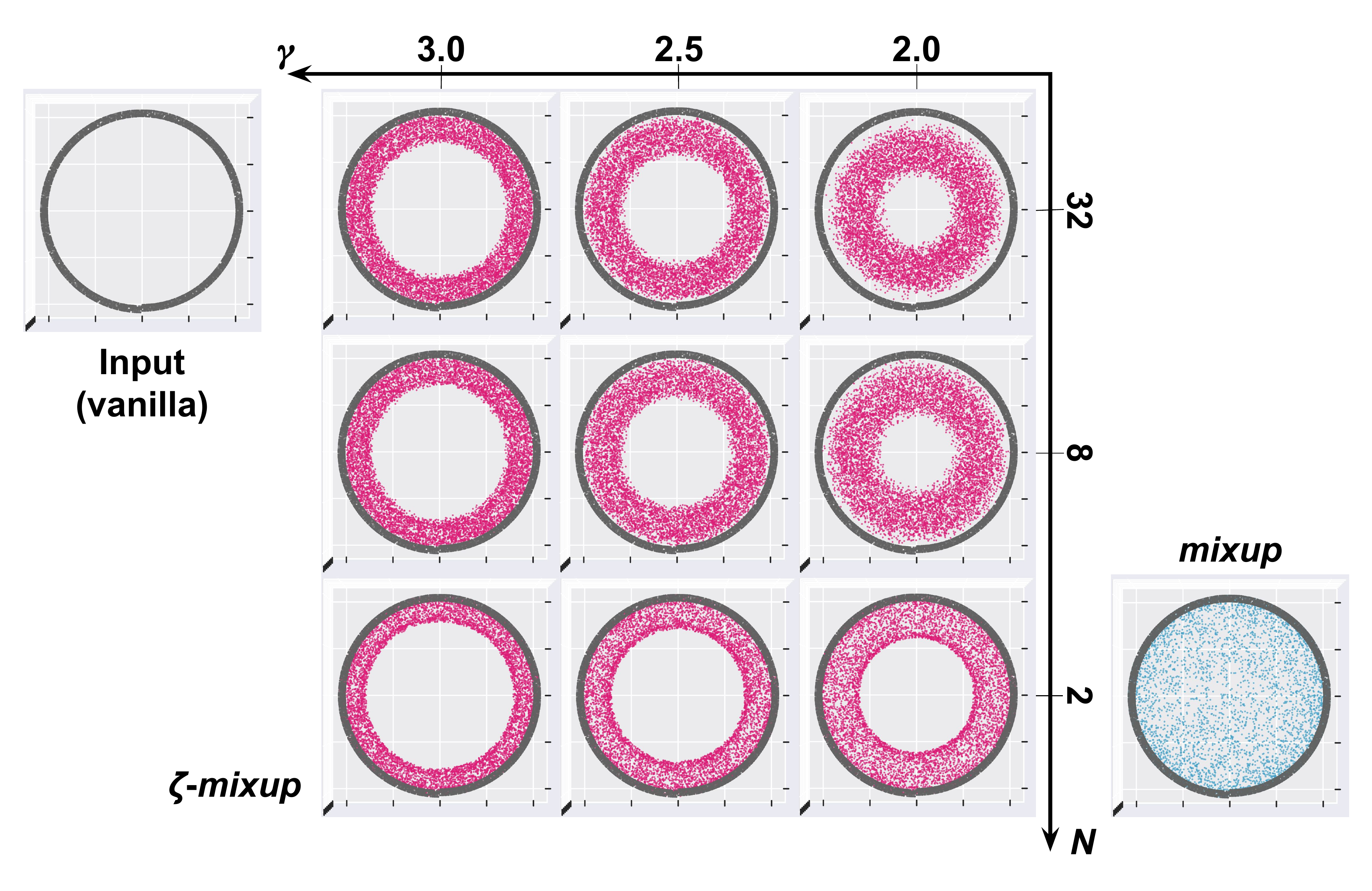}
         \caption{Visualizing the effect of changing $N$ and $\pvar$ on the output of \nmixup. 2D projections of the 3D manifolds are shown from the (elevation, azimuth): (0\textdegree, 90\textdegree) viewpoint.}
     \end{subfigure}
        \caption{Visualizing how \mixup~and \nmixup~synthesize new samples. Notice that \mixup~produces samples that (a) are assigned wrong labels and (b) are sampled off the original data manifold, with an extreme example being where the points are sampled from the hollow region in the helix. A moderately low value of $\pvar$ allows for a more reasonable exploration of the data manifold, with higher values of $N$ allowing more diversity in the synthesized points.
        }
        \label{fig:distribution_vis}
\end{figure*}

\section{Datasets and Experimental Details}

\noindent\textbf{Synthetic Data:} \label{subsec:synth_data}
We first generate two-class distributions of $2^9 = 512$ samples with non-linear class boundaries in the shape of interleaving crescents (CRESCENTS) and spirals (SPIRALS), and add Gaussian noise $\mathcal N(0,0.1)$,
as shown in the ``Input" column of Fig.~\ref{fig:distribution_vis} (a). Next, moving on to higher dimensional spaces, we generate synthetic data distributed along a helix. In particular, we sample $2^{13}$ = 8,192 points off a 1-D helix embedded in $\mathbb{R}^3$ (see the ``Input" column of Fig.~\ref{fig:distribution_vis} (b)) and, as a manifestation of low-D manifolds lying in high-D ambient spaces, a 1-D helix in $\mathbb{R}^{12}$. \\
\\
\noindent\textbf{Natural Image Datasets (NATURAL):}
We use MNIST~\cite{lecun1998gradient}, CIFAR-10 and CIFAR-100~\cite{krizhevsky2009learning}, Fashion-MNIST (F-MNIST)~\cite{xiao2017fashion}, STL-10~\cite{coates2011analysis}, and, to evaluate models on real-world images but with faster training times, two 10-class subsets of the standard ImageNet~\cite{deng2009imagenet}:  Imagenette and Imagewoof~\cite{howard2019imagenette}. 
Further details about these datasets and model training are in the Appendix. We train ResNet-18~\cite{he2016deep} models and report the overall error rate (ERR) since the datasets have balanced class distributions. \\
\\
We use 10 skin lesion image diagnosis datasets: 
ISIC 2016~\cite{gutman2016skin}, ISIC 2017~\cite{codella2018skin}, ISIC 2018~\cite{codella2019skin,tschandl2018ham10000}, 
MSK~\cite{ISICArchive}, 
(all datasets have dermoscopic images, \ie captured by a dermatoscope~\cite{kittler2002diagnostic,menzies2009dermoscopy}, except those denoted by a $\dagger$).
We train ResNet-18 and ResNet-50~\cite{he2016deep} models 
on the 5-class diagnosis task used in the literature~\cite{kawahara2018seven,coppola2020interpreting,abhishek2021predicting} 
and report three evaluation metrics that account for the inherent class imbalance: balanced accuracy (\ie macro-averaged recall)~\cite{mosley2013balanced} (ACC$_{\mathrm{bal}}$) and micro- and macro-averaged F1 scores.\\

\begin{figure*}[ht!]
     \centering
     \begin{subfigure}[t]{0.27\textwidth}
         \centering
         \includegraphics[width=\textwidth]{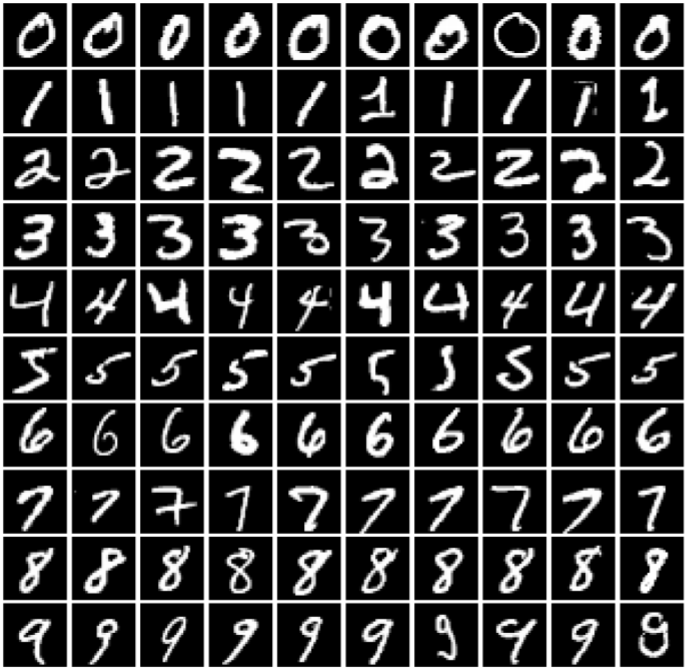}
         \caption{The first 10 images of each MNIST class.}
     \end{subfigure}
     \quad
     \begin{subfigure}[t]{0.335\textwidth}
         \centering
         \includegraphics[width=\textwidth]{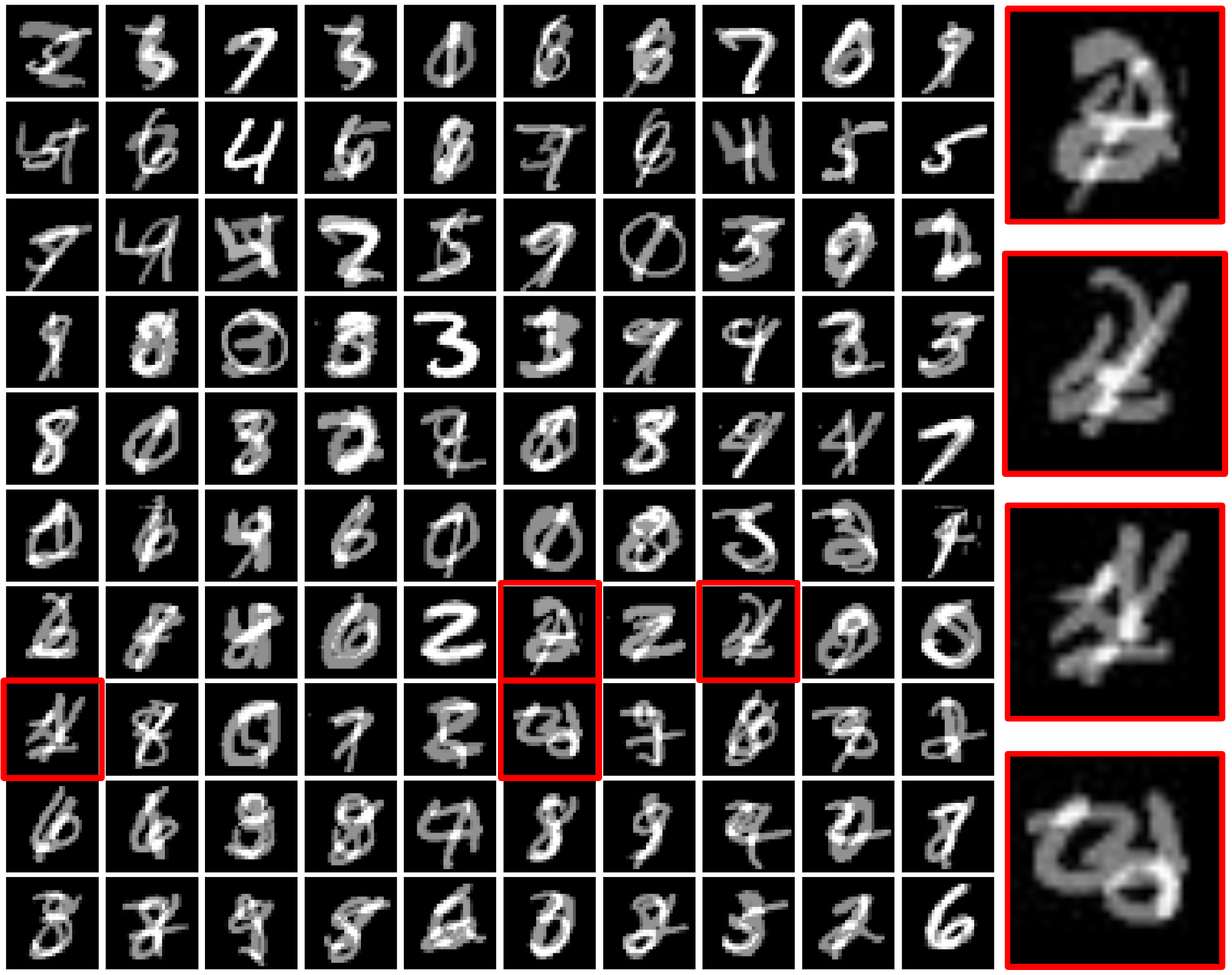}
         \caption{Output of \mixup\xspace($N = 25$).}
     \end{subfigure}
     \quad
     \begin{subfigure}[t]{0.27\textwidth}
         \centering
         \includegraphics[width=\textwidth]{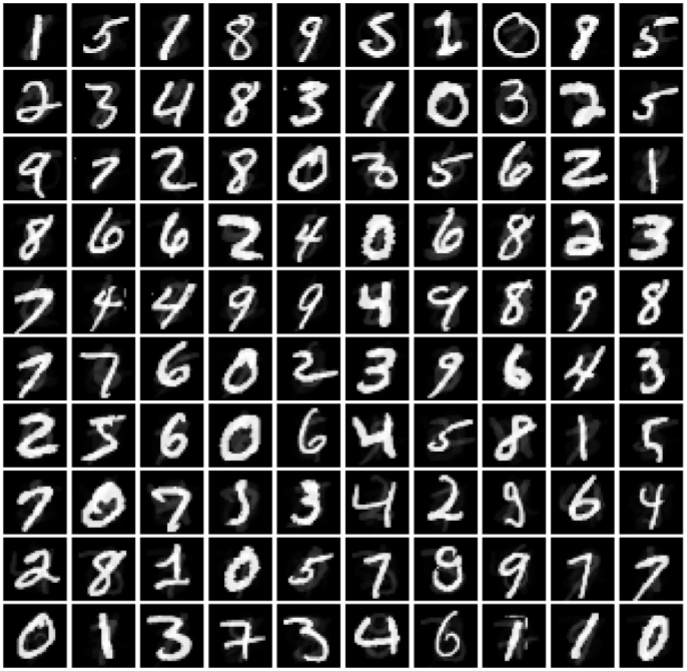}
         \caption{Output of \nmixup\xspace($N = 25$; $\pvar=2.8$).}
     \end{subfigure}
     \\
     \begin{subfigure}[t]{0.24\textwidth}
         \centering
         \includegraphics[width=\textwidth]{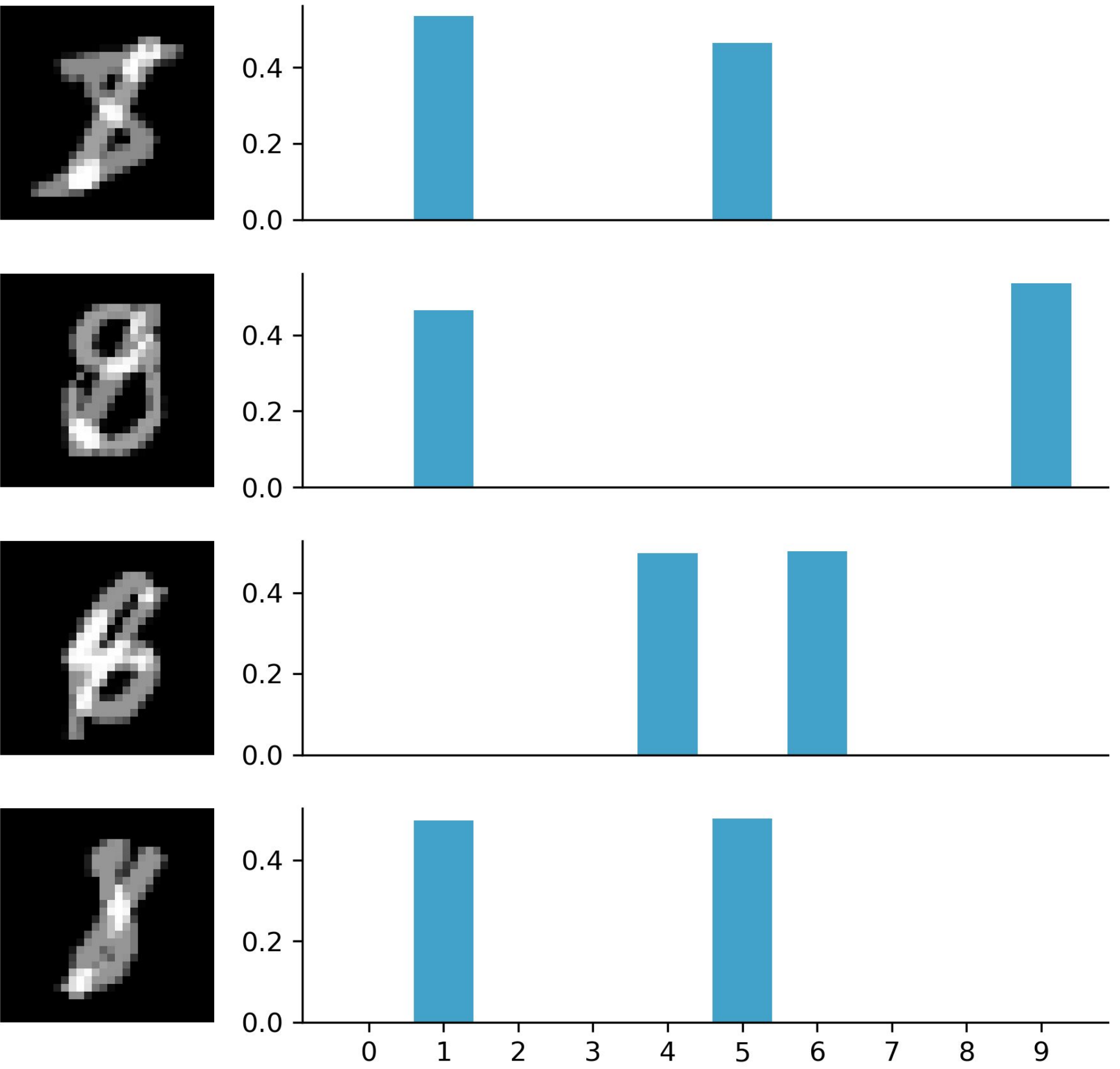}
         \caption{Soft labels of \mixup~outputs.}
     \end{subfigure}
     \hfill
      \begin{subfigure}[t]{0.24\textwidth}
         \centering
         \includegraphics[width=\textwidth]{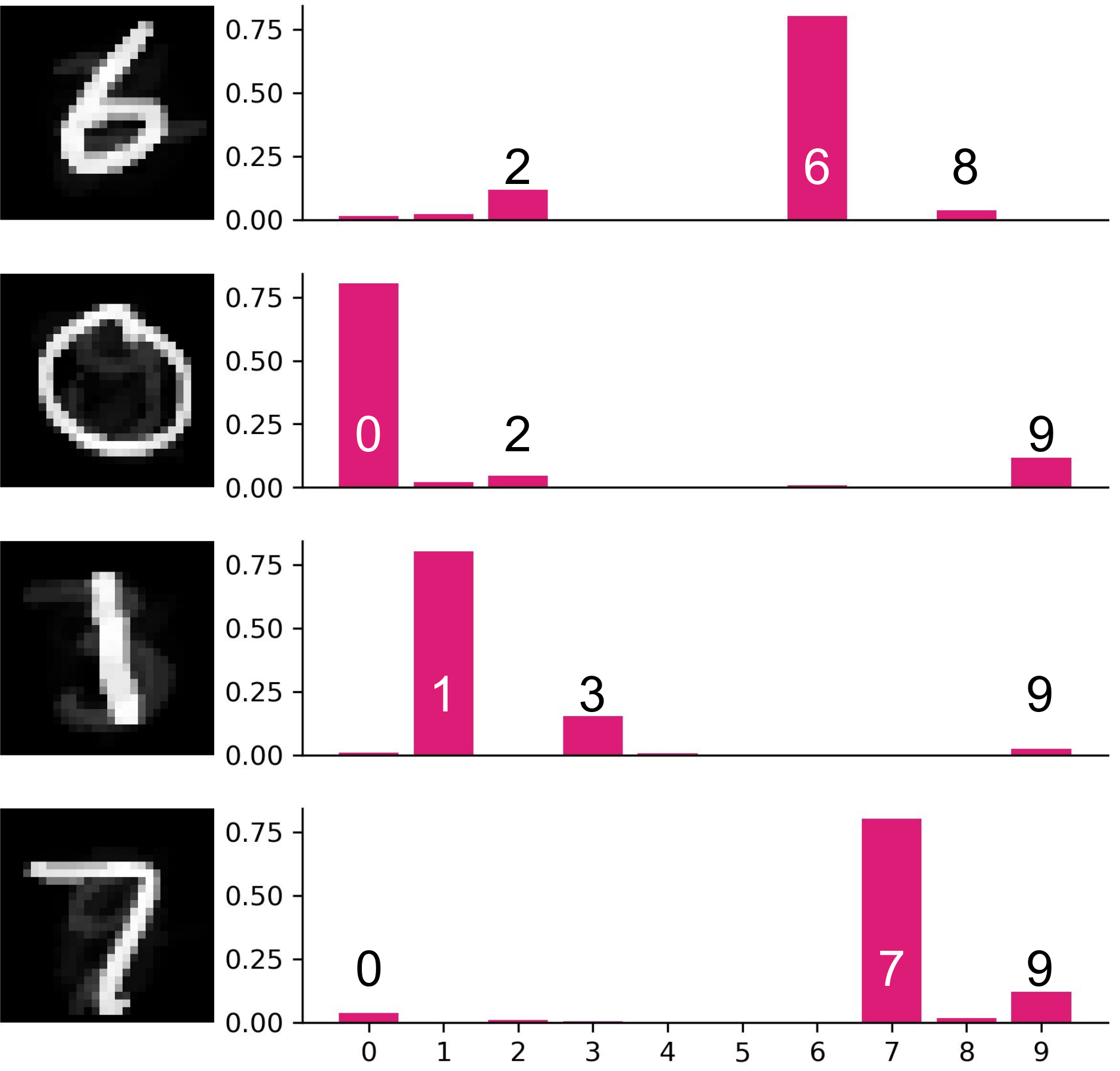}
         \caption{Soft labels of \nmixup~outputs.}
     \end{subfigure}
     \hfill
 \begin{subfigure}[t]{0.24\textwidth}
     \centering
     \includegraphics[width=\textwidth]{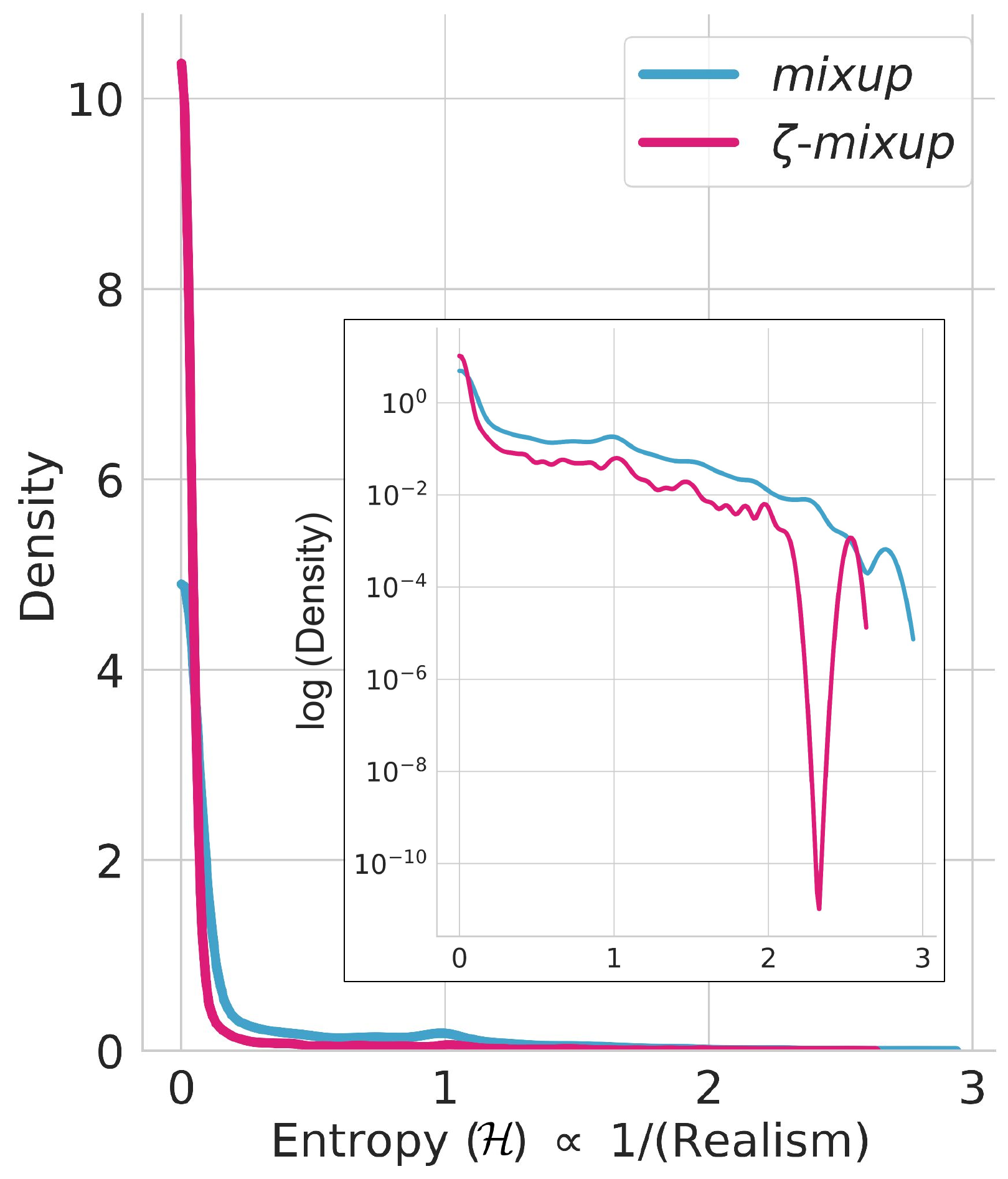}
     \caption{Measuring realism of samples.}
 \end{subfigure}
 \hfill
  \begin{subfigure}[t]{0.25\textwidth}
     \centering
     \includegraphics[width=\textwidth]{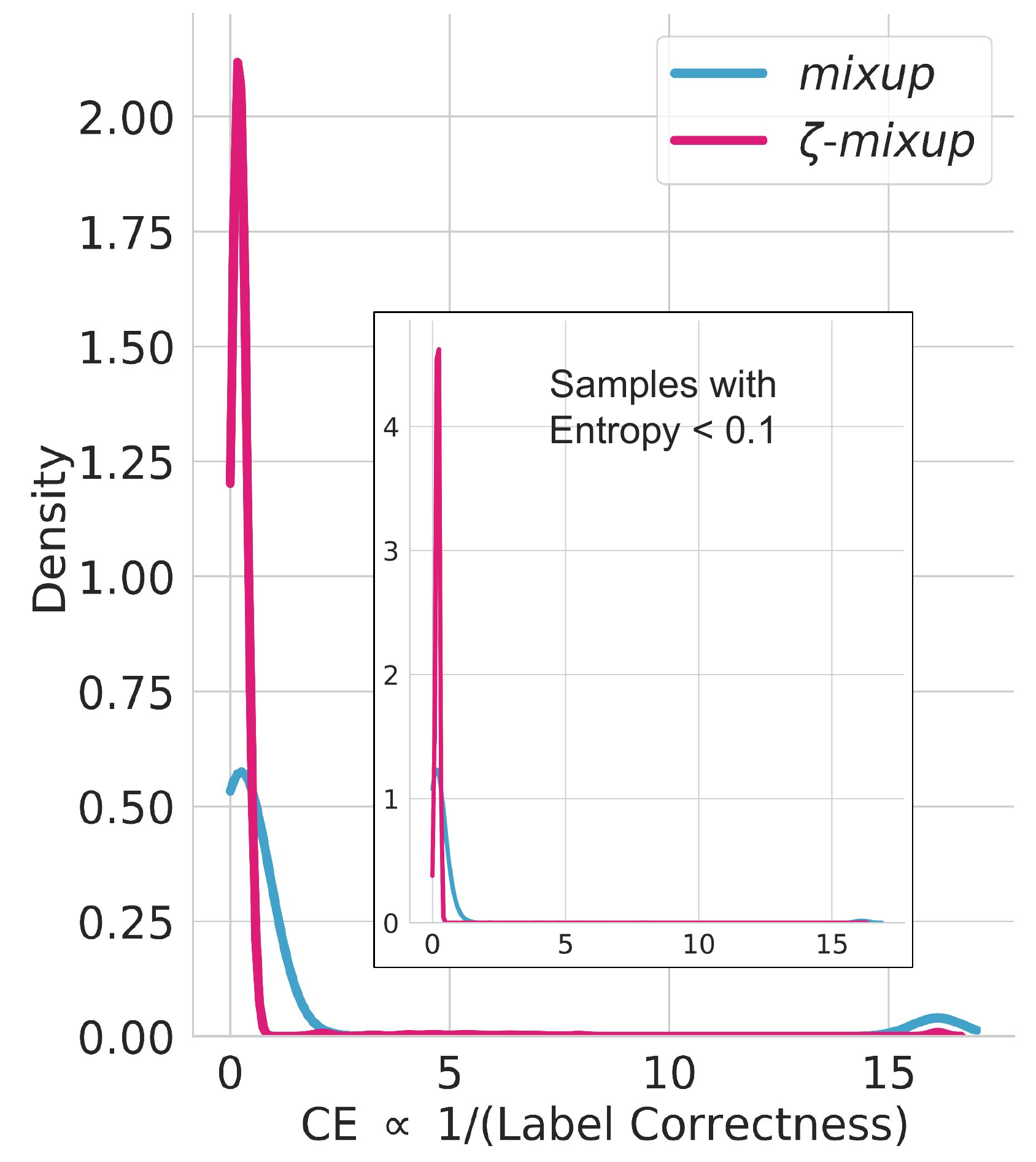}
     \caption{Measuring correctness of labels.}
\end{subfigure}
        \caption{Visualizing the results obtained using \mixup~and \nmixup~on images from the MNIST dataset. In (d) and (e), we visualize the probabilistic ``soft" labels assigned to images generated by \mixup~and \nmixup~respectively. Notice how all images in (d) look close to the digit ``8" while their assigned soft labels do not contain the class ``8". \nmixup~alleviates this issue and the soft labels in (e) correspond exactly to the class the synthesized images belong to.
        Also note how \mixup~produces images with a wrong label, i.e., a label different from the original labels of the 2 images it is interpolated from. In (f) and (g), we evaluate the realism of \mixup's and \nmixup's generated samples and the correctness of the corresponding labels by measuring the entropy of the Oracle's predictions ($\mathcal{H}$) and the cross entropy of the Oracle's predictions with the soft labels (CE) respectively. For both (f) and (g), lower values are better.}
        \label{fig:mnist_vis}
\end{figure*}

\noindent\textbf{Datasets of Other Medical Imaging Modalities (MEDMNIST):}
To evaluate our models on multiple medical imaging modalities, we use the 8 datasets from the MedMNIST Classification Decathlon~\cite{yang2021medmnist}: PathMNIST$^\ddagger$ (histopathology images), DermaMNIST$^\ddagger$ (multi-source images of pigmented skin lesions), OCTMNIST (optical coherence tomography images), PneumoniaMNIST (pediatric chest X-ray images), BreastMNIST (breast ultrasound images), and OrganMNIST\_\{A, C, S\} (axial, coronal, and sagittal views respectively of 3D computed tomography CT scans). Datasets denoted by $\ddagger$ consist of RGB images, others are grayscale. We train ResNet-18~\cite{he2016deep} models and report overall accuracy (ACC) and area under the ROC curve (AUC), similar to the MedMNIST paper~\cite{yang2021medmnist}.

\begin{figure*}[ht!]
     \centering
     \begin{subfigure}[t]{0.23\textwidth}
         \centering
         \includegraphics[width=\textwidth]{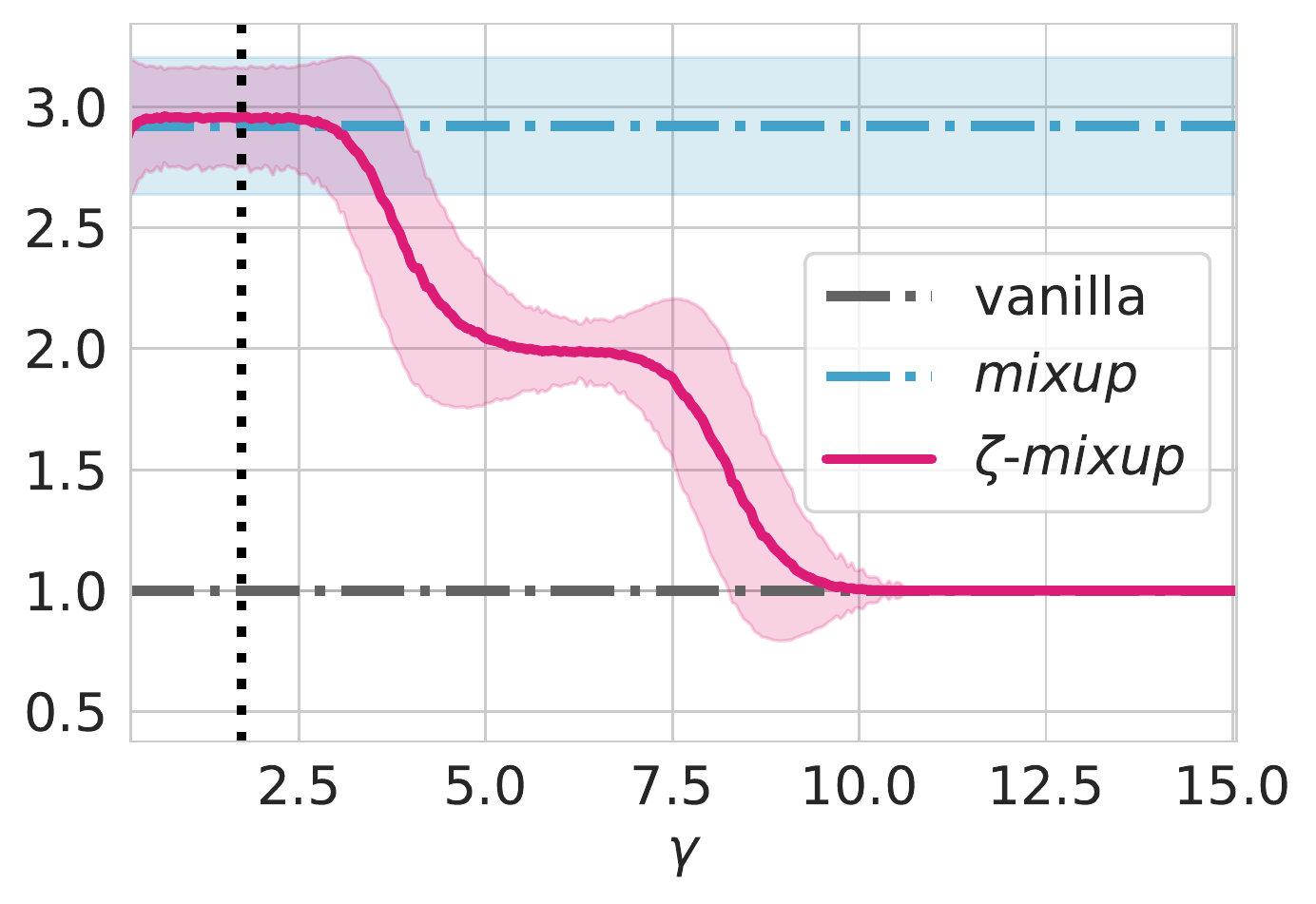}
         \caption{A 1D helix in a 3D embedding space ($n_{\mathrm{NN}} = 8$).}
     \end{subfigure}
     \hfill
     \begin{subfigure}[t]{0.23\textwidth}
         \centering
         \includegraphics[width=\textwidth]{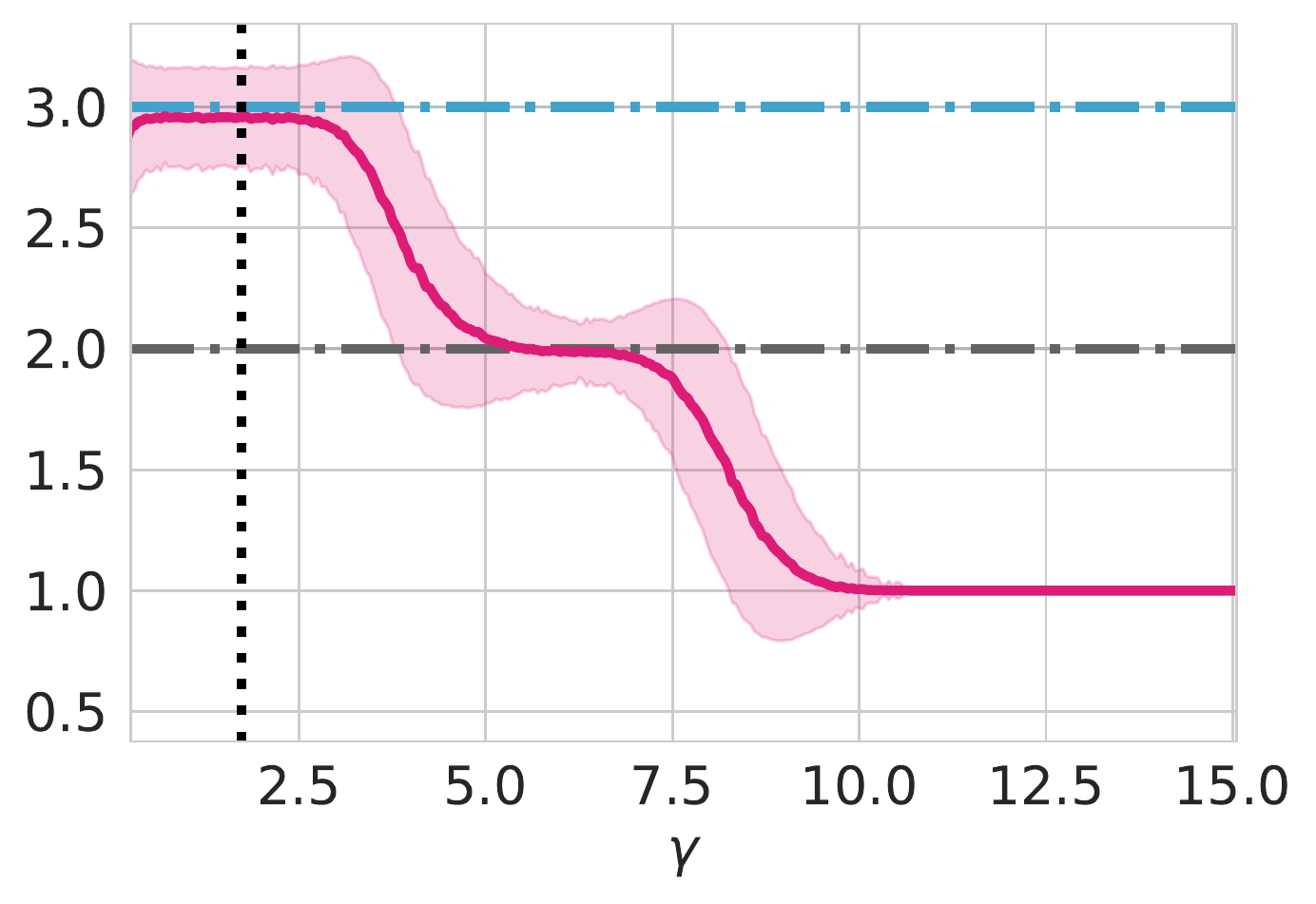}
         \caption{A 1D helix in a 3D embedding space ($n_{\mathrm{NN}} = 128$).}
     \end{subfigure}
     \hfill
     \begin{subfigure}[t]{0.23\textwidth}
         \centering
         \includegraphics[width=\textwidth]{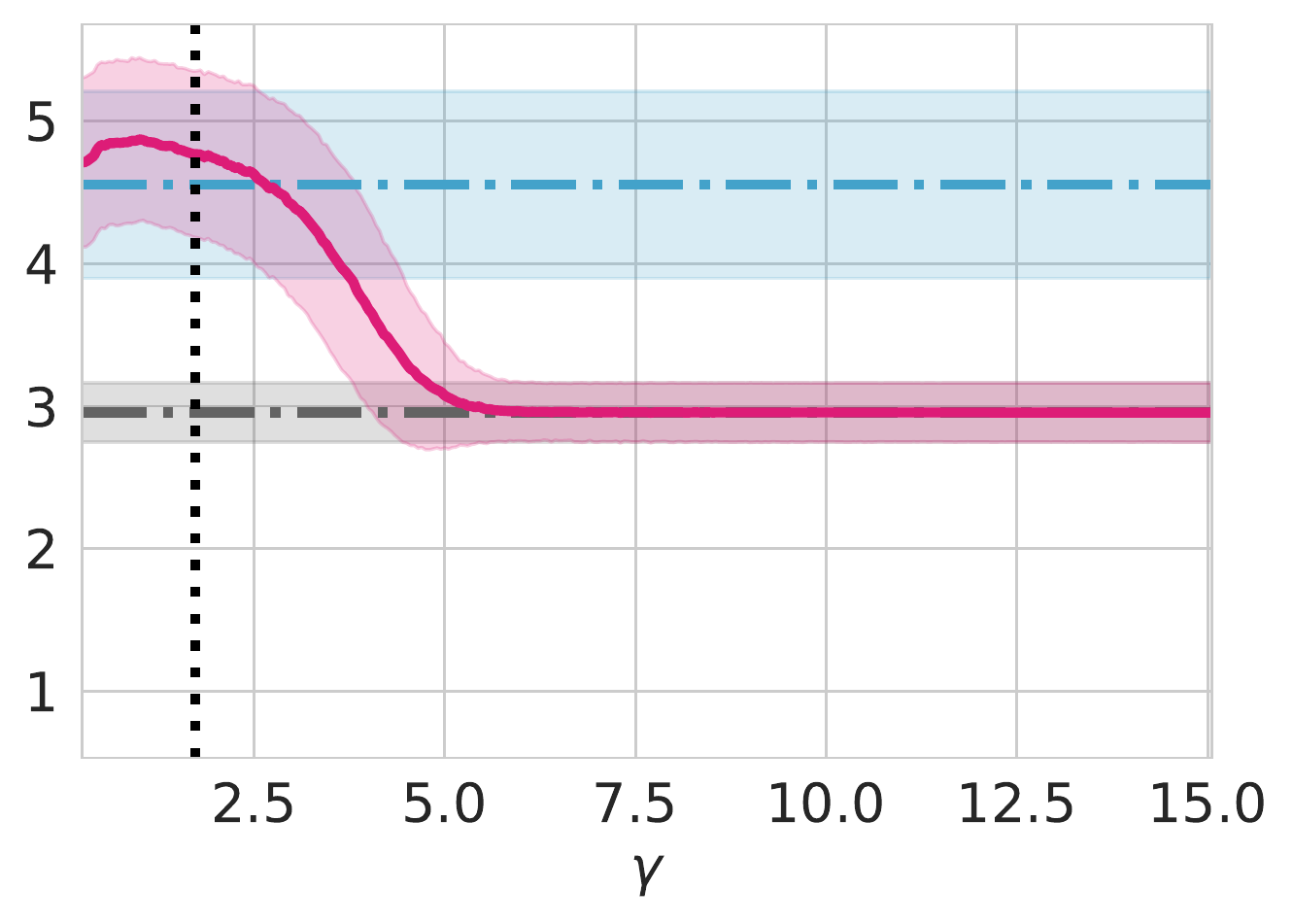}
         \caption{A 3D highly curved manifold in a 12D embedding space ($n_{\mathrm{NN}} = 8$).}
     \end{subfigure}
     \hfill
     \begin{subfigure}[t]{0.23\textwidth}
         \centering
         \includegraphics[width=\textwidth]{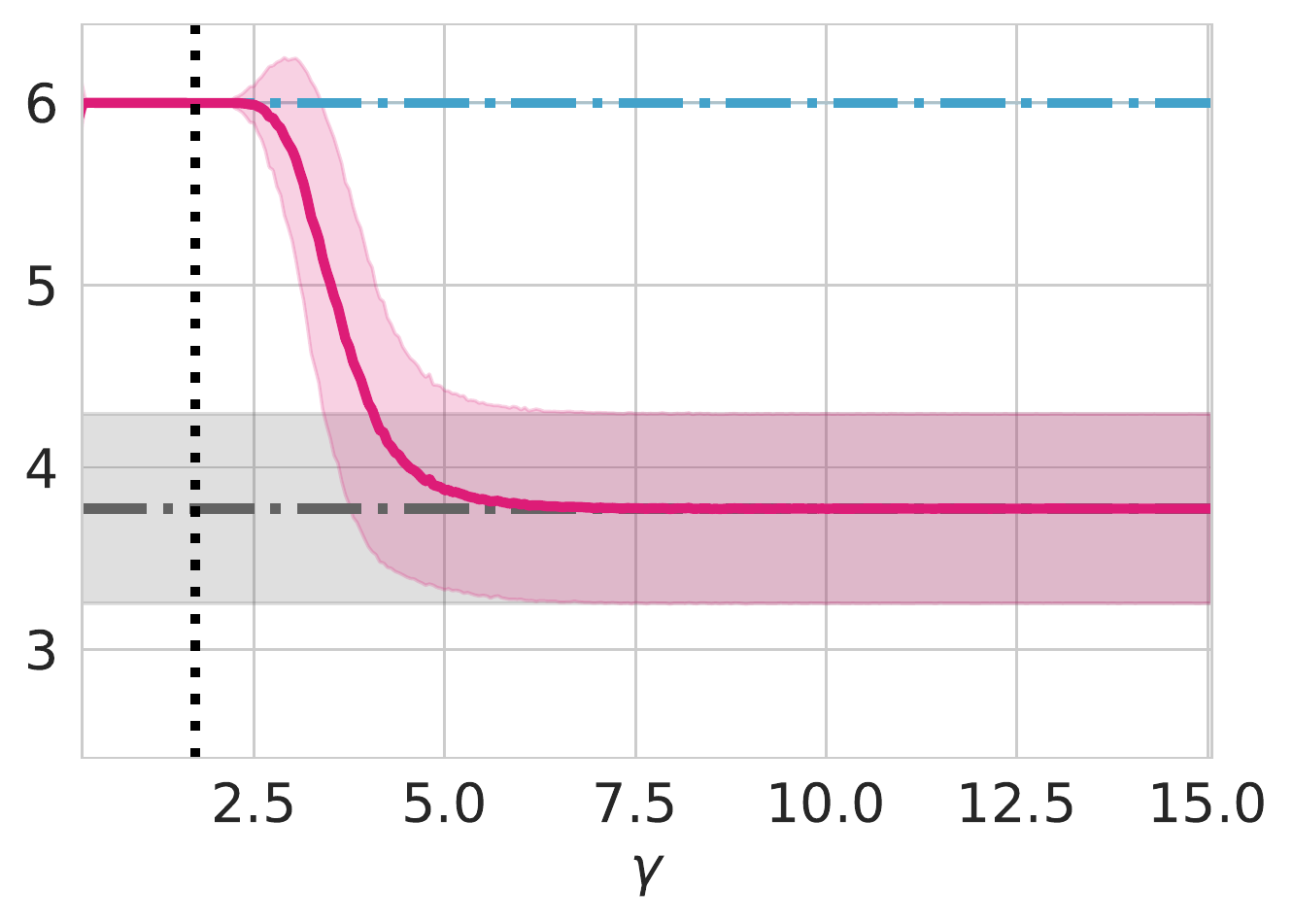}
         \caption{A 3D highly curved manifold in a 12D embedding space ($n_{\mathrm{NN}} = 128$).}
     \end{subfigure}
     \\
      \begin{subfigure}[t]{0.23\textwidth}
         \centering
         \includegraphics[width=\textwidth]{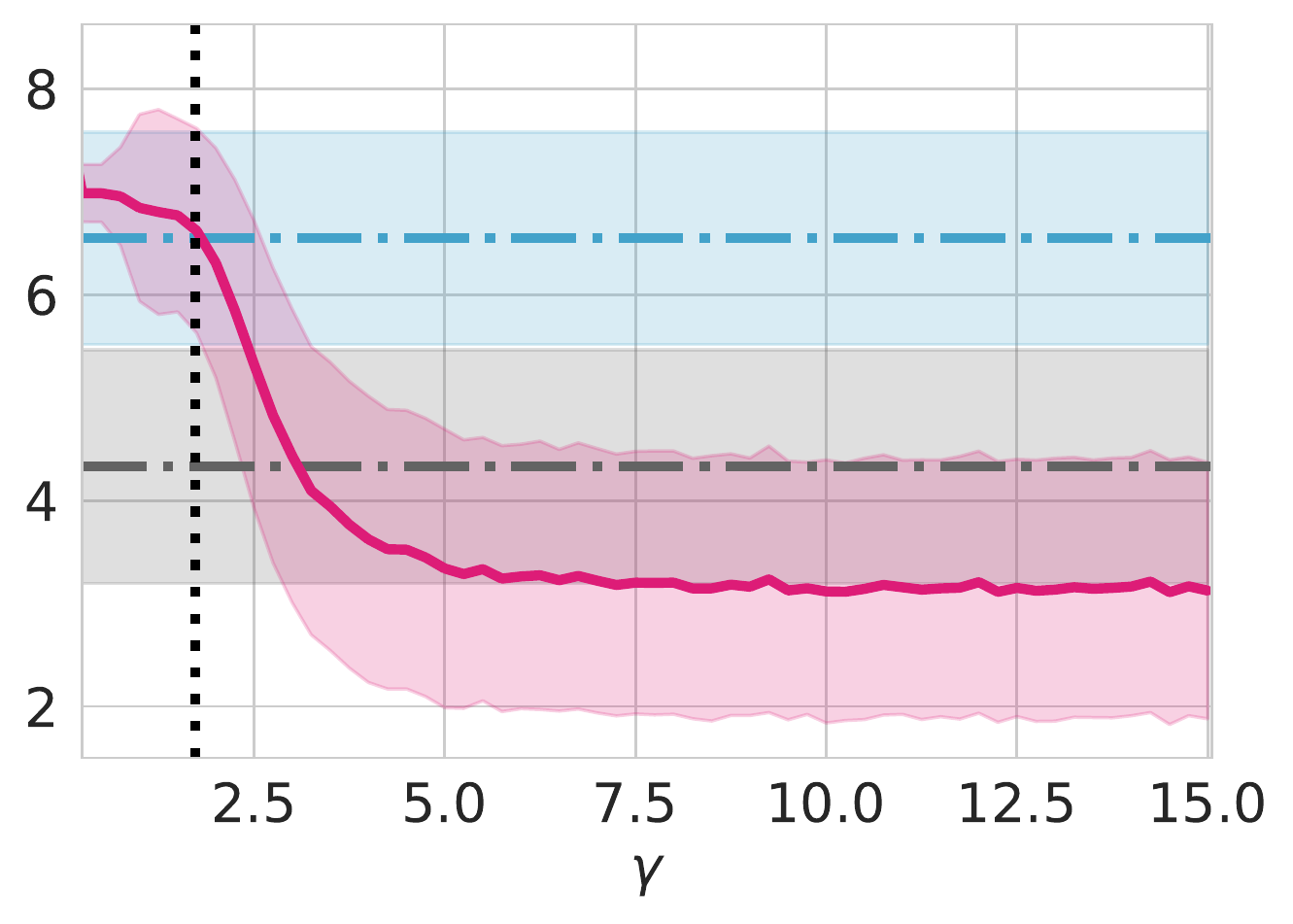}
         \caption{CIFAR-10 images ($n_{\mathrm{NN}} = 8$).}
     \end{subfigure}
     \hfill
     \begin{subfigure}[t]{0.23\textwidth}
         \centering
         \includegraphics[width=\textwidth]{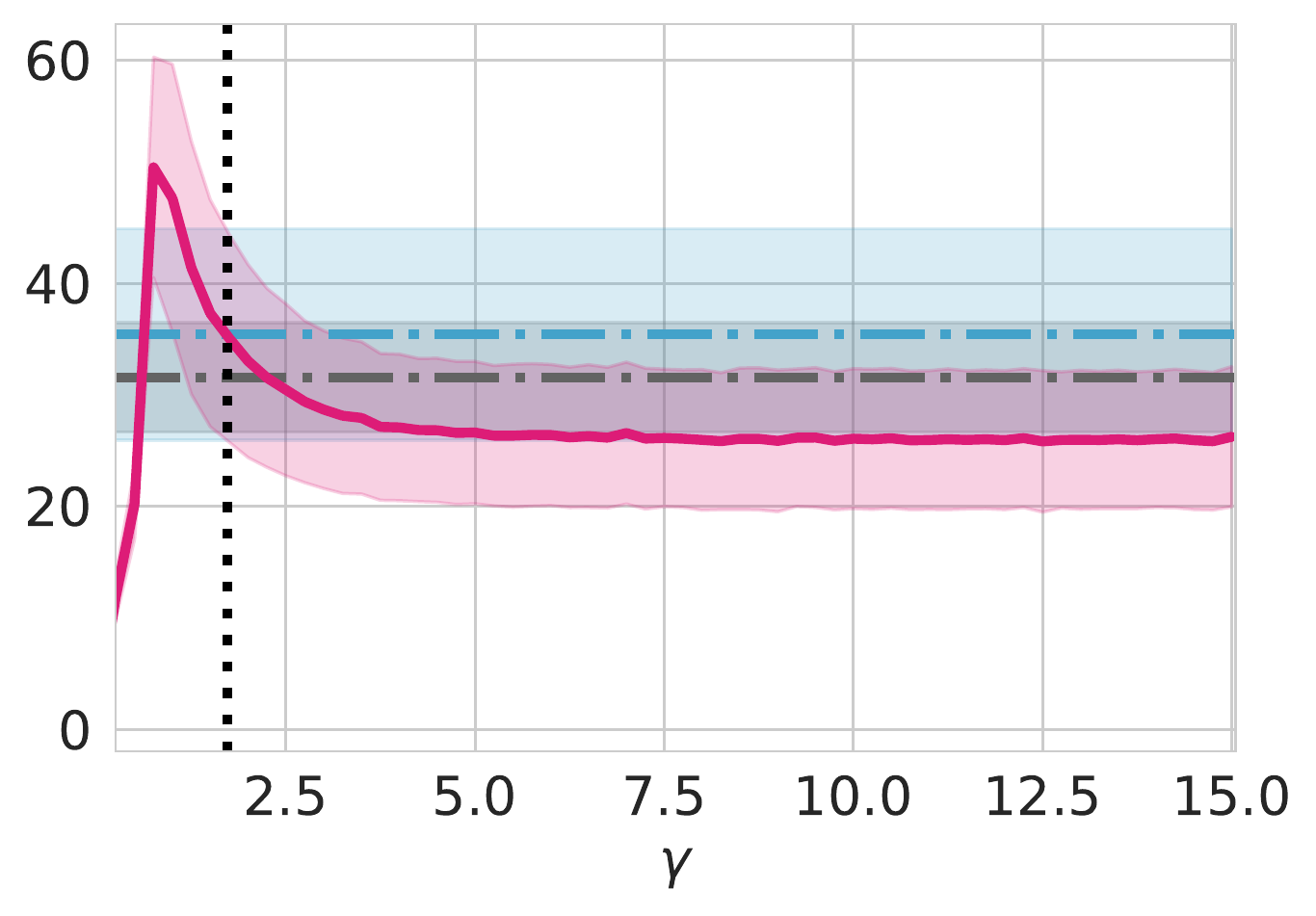}
         \caption{CIFAR-10 images ($n_{\mathrm{NN}} = 128$).}
     \end{subfigure}
     \hfill
     \begin{subfigure}[t]{0.23\textwidth}
         \centering
         \includegraphics[width=\textwidth]{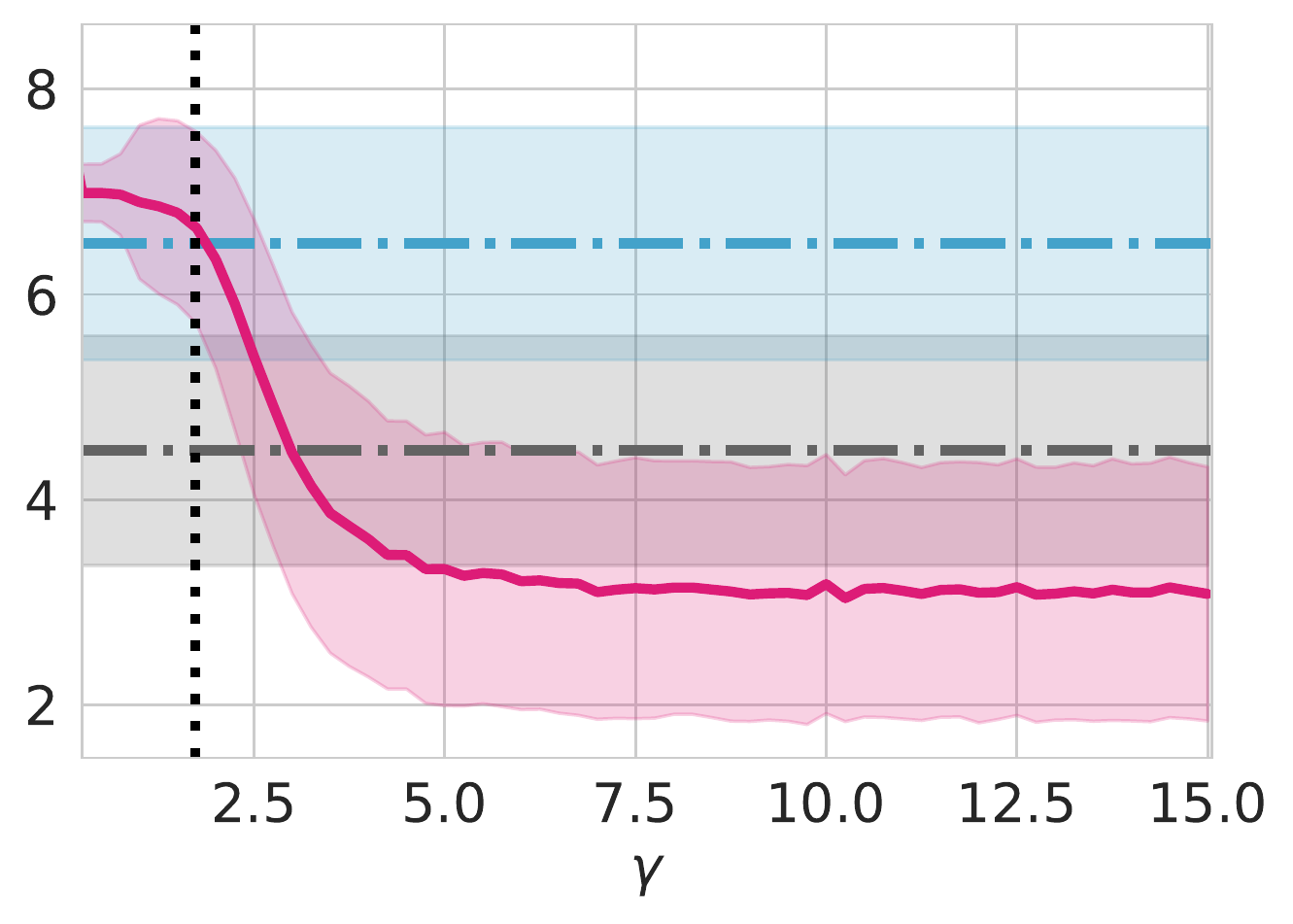}
         \caption{CIFAR-100 images ($n_{\mathrm{NN}} = 8$).}
     \end{subfigure}
     \hfill
     \begin{subfigure}[t]{0.23\textwidth}
         \centering
         \includegraphics[width=\textwidth]{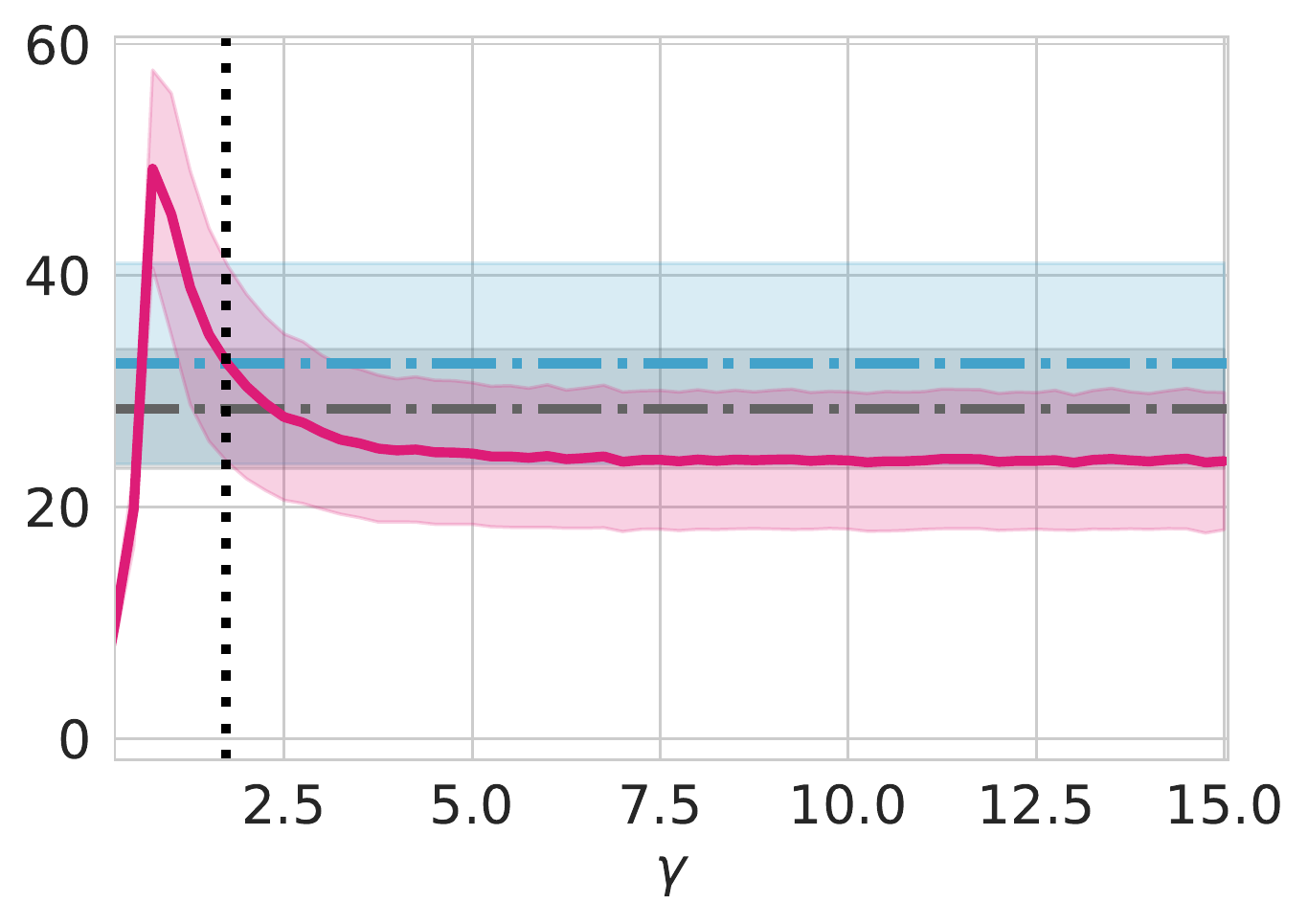}
         \caption{CIFAR-100 images ($n_{\mathrm{NN}} = 128$).}
     \end{subfigure}
        \caption{Visualizing how \nmixup~affects the local intrinsic dimensionality of synthesized datasets distributed as 1D helices ((a) and (b)) and 3D manifold ((c)) in a higher dimensional embedding space as the hyperparameter $\pvar$ changes. The mean and the standard deviation of the intrinsic dimensionality are shown using lines (bold or dashed-dotted) and shaded bands respectively. The vertical dotted line in all the plots denotes the value of $\pvar=\pvarmin$ (Theorem~\ref{theorem:pvalmin}).}
        \label{fig:localID}
\end{figure*}

\section{Results and Discussion}    \label{sec:results}

We present experimental evaluation on controlled synthetic (1-D manifolds in 2-D and 3-D, 3-D manifolds in 12-D) and on 24 real-world natural and medical image datasets of various modalities. We evaluate the quality of \nmixup's outputs: directly, by assessing the realism, label correctness, diversity, richness~\cite{wood2021synthetic,wood2021fake}, and preservation of intrinsic dimensionality of the generated samples; as well as indirectly, by assessing the effect of the samples on the performance of downstream classification tasks. 
    
\begin{table*}[ht]
\centering
\caption{Classification error rates (ERR) on NATURAL. The lowest and the second lowest errors are formatted with \textbf{bold} and {\ul underline} respectively. Improvements over \mixup~are shown in green. ERRs are averages over 3 runs.}
\label{tab:natural_resultsV2}
\resizebox{0.9\textwidth}{!}{%
\setlength{\tabcolsep}{0.5em}
\def\arraystretch{1.5}
\begin{tabular}{cllllll}

\toprule
\textbf{Method}                          & \multicolumn{1}{l}{CIFAR-10}         & CIFAR-100  & F-MNIST  & STL-10  & Imagenette  & Imagewoof  \\ 
\# images (\#classes) & 60,000 (10) & 60,000 (10) & 60,000 (10) & 13,000 (10) & 13,394 (10) & 12,954 (10) \\
\midrule
ERM            & \multicolumn{1}{l}{$5.48$} & $23.33$  & $6.11$  & $25.74$  & $16.08$  & $30.92$ \\ 
\mixup          & \multicolumn{1}{l}{$4.68$} & $21.85$  & $6.04$  & $25.31$  & $16.20$  & $30.80$ \\ 
\nmixup ($\pvar=2.4$)        & \multicolumn{1}{l}{\bm{$4.42\Rise{5.56\%}$}} & {$21.50\Rise{1.60\%}$}  & {$6.04\rise{0.00\%}$}  & \bm{$24.14\Rise{4.62\%}$}  & \bm{$15.16\Rise{6.42\%}$}  & {$30.72\Rise{0.26\%}$} \\ 
\nmixup ($\pvar=2.8$)        & \multicolumn{1}{l}{\ul $4.67$}$\Rise{0.21\%}$ & {\ul $21.35$}$\Rise{2.29\%}$  & \bm{$5.70\Rise{5.63\%}$}  & {\ul $24.82$}$\Rise{1.94\%}$  & {\ul $15.62$}$\Rise{3.58\%}$  & \bm{$30.21\Rise{1.92\%}$} \\ 
\nmixup ($\pvar=4.0$)       & \multicolumn{1}{l}{\bm{$4.42\Rise{5.56\%}$}} & \bm{$21.28\Rise{2.61\%}$}  & {\ul $5.89$}$\Rise{2.48\%}$  & {$24.92\Rise{1.54\%}$}  & {$15.92\Rise{1.73\%}$}  & {\ul $30.67$}$\Rise{0.42\%}$ \\ 
\bottomrule
\end{tabular}
}
\end{table*}

\subsection{Realism and Label Correctness}  \label{subsec:realism}
    While it is desirable that the output of any augmentation method be different from the original data in order to better minimize $R_{\mathrm{vic}}$ (Sec.~\ref{sec:method}), we want to avoid sampling synthetic points off the original data manifold. 
    Applying \mixup~to CRESCENTS and SPIRALS datasets shows that \mixup~does not respect the individual class boundaries and synthesizes samples off the data manifold, also known as manifold intrusion~\cite{guo2019mixup}. This also results in the generated samples being wrongly labeled, \ie points in the ``red" class's region being assigned ``blue" labels and vice versa, which we term as ``label error". On the other hand, \nmixup~preserves the class decision boundaries irrespective of the hyperparameter $\pvar$ and additionally allows for a controlled interpolation between the original distribution and \mixup-like output.
    With \nmixup, small values of $\pvar$ (greater than $\pvarmin$; see Theorem~\ref{theorem:pvalmin}) lead to samples being generated further away from the original data and as $\pvar$ increases, the resulting distribution approaches the original data.
    
    Applying \mixup~in 3D space (\fig\ref{fig:distribution_vis} (b)) results in a somewhat extreme case of the generated points sampled off the data manifold, filling up the entire hollow region in between the helical distribution. \nmixup, however, similar to Fig.~\ref{fig:distribution_vis} (a), generates points that are relatively much closer to the original points, and increasing the value of $\pvar$ to a large value, say $\pvar=6.0$, leads the generated samples to lie almost perfectly on the original data manifold.
    
    Moving on to higher dimensions with the MNIST data, \ie 784-D, we observe that the problems with \mixup's output are even more severe and that the improvements by using \nmixup~are more conspicuous. For each digit class in the MNIST dataset, we take the first 10 samples as shown in \fig\ref{fig:mnist_vis} (a) and use \mixup~and \nmixup~to generate 100 new images each (\fig\ref{fig:mnist_vis} (b-c)). It is easy to see that the digits in \nmixup's output are more discernible than those in \mixup's output.
    
    Finally, to analyze the correctness of probabilistic labels in the outputs of \mixup~and \nmixup, we pick 4 samples from each.
    \mixup's outputs (\fig\ref{fig:mnist_vis} (d)) all look like images of handwritten ``8". The soft label of the first digit in \fig\ref{fig:mnist_vis} (d) is $[0, 0.53, 0, 0, 0, 0.47, 0, 0, 0, 0]$, where the $i^{\mathrm{th}}$ index is the probability of the $i^{\mathrm{th}}$ digit, implying that this output has been obtained by mixing images of digits ``1" and ``5". Interestingly, neither the resulting output looks like the digits ``1" or ``5" nor is the digit ``8" one of the classes used as input for this image. I.e., there is a disagreement, with \mixup, between the appearance of the synthesized image and its assigned label. Similar label error exists in the other images in \fig\ref{fig:mnist_vis} (d). On the other hand, there is a clear agreement between the images produced by \nmixup~and the labels assigned to them (\fig\ref{fig:mnist_vis} (e)).

\begin{table*}[ht]
\centering
\caption{Classification performance (ACC$_{\mathrm{bal}}$) evaluated on SKIN.
The highest and the second highest values of ACC$_{\mathrm{bal}}$ have been formatted with \textbf{bold} and {\ul underline} respectively. 
}
\label{tab:skin_results_narrow}
\resizebox{0.85\textwidth}{!}{%
\setlength{\tabcolsep}{0.5em}
\begin{tabular}{ccccccc}
\toprule
\multicolumn{2}{c}{\textbf{Dataset}}                                               & \multicolumn{1}{c}{ISIC 2016} & \multicolumn{1}{c}{ISIC 2017}                                                     & \multicolumn{1}{c}{ISIC 2018}                                                        & \multicolumn{1}{c}{MSK}      & \multicolumn{1}{c}{UDA}      \\ 
\multicolumn{2}{c}{\#images (\#classes)}                                        & \multicolumn{1}{c}{1,279 (2)}  & \multicolumn{1}{c}{2,750 (3)}                                                      & \multicolumn{1}{c}{10,015 (5)}                                                        & \multicolumn{1}{c}{3,551 (4)} & \multicolumn{1}{c}{601 (2)}  \\ \hline
\parbox[t]{2mm}{\multirow{5}{*}{\rotatebox[origin=c]{90}{ResNet-18}}}                       & ERM                      & 70.44\%                        & 69.31\%                                                                            & 84.31\%                                                                               & 62.35\%                       & 67.46\%                       \\
                                                 & \mixup~                   & 71.77\%                        & 71.60\%                                                                            & 83.96\%                                                                               & 63.59\%                       & 69.38\%                       \\
                                                 & \nmixup~(2.4)            & \textbf{74.53\%}               & \textbf{73.02\%}                                                                   & \textbf{87.20\%}                                                                      & \textbf{65.52\%}              & \textbf{70.54\%}              \\
                                                 & \nmixup~(2.8)            & {\ul 73.03\%}                  & {\ul 72.33\%}                                                                      & {\ul 84.67\%}                                                                         & {\ul 64.87\%}                 & {\ul 70.22\%}                 \\
                                                 & \nmixup~(4.0)            & 72.27\%                        & 70.93\%                                                                            & 83.63\%                                                                               & 62.39\%                       & 67.88\%                       \\ \hdashline 
\multicolumn{1}{c}{\parbox[t]{2mm}{\multirow{5}{*}{\rotatebox[origin=c]{90}{ResNet-50}}}} & \multicolumn{1}{c}{ERM} & \multicolumn{1}{c}{71.75\%}   & \multicolumn{1}{c}{68.20\%}                                                       & \multicolumn{1}{c}{81.28\%}                                                          & \multicolumn{1}{c}{63.86\%}  & \multicolumn{1}{c}{66.85\%}  \\ 
\multicolumn{1}{c}{}                           & \mixup~                   & 72.08\%                        & 71.51\%                                                                            & 85.65\%                                                                               & {\ul 65.62\%}                 & 67.27\%                       \\
\multicolumn{1}{c}{}                           & \nmixup~(2.4)            & 71.52\%                        & \textbf{72.91\%}                                                                   & 84.75\%                                                                               & 65.23\%                       & {\ul 68.39\%}                 \\
\multicolumn{1}{c}{}                           & \nmixup~(2.8)            & \textbf{72.20\%}               & 69.99\%                                                                            & {\ul 86.59\%}                                                                         & \textbf{65.94\%}              & \textbf{70.92\%}              \\
\multicolumn{1}{c}{}                           & \nmixup~(4.0)            & {\ul 72.11\%}                  & {\ul 72.39\%}                                                                      & \textbf{89.18\%}                                                                      & 65.33\%                       & 67.59\%                       \\ 
\midrule
\multicolumn{2}{c}{\textbf{Dataset}}                                               & \multicolumn{1}{c}{DermoFit}  & \multicolumn{1}{c}{\def\arraystretch{1.0} \begin{tabular}[c]{@{}c@{}}derm7point-C\end{tabular}} & \multicolumn{1}{c}{\def\arraystretch{1.0}\begin{tabular}[c]{@{}c@{}}derm7point-D\end{tabular}} & \multicolumn{1}{c}{PH2}      & \multicolumn{1}{c}{MED-NODE} \\ 
\multicolumn{2}{c}{\#images (\#classes)}                                        & \multicolumn{1}{c}{1,300 (5)}  & \multicolumn{1}{c}{1,011 (5)}                                                      & \multicolumn{1}{c}{1,011 (5)}                                                         & \multicolumn{1}{c}{200 (2)}  & \multicolumn{1}{c}{170 (2)}  \\ \hline
\parbox[t]{2mm}{\multirow{5}{*}{\rotatebox[origin=c]{90}{ResNet-18}}}                       & ERM                      & 80.43\%                        & 42.08\%                                                                            & 54.79\%                                                                               & 84.38\%                       & 75.00\%                       \\
                                                 & \mixup~                   & 81.17\%                        & 46.68\%                                                                            & 55.38\%                                                                               & {\ul 85.94\%}                 & 80.36\%                       \\
                                                 & \nmixup~(2.4)            & 82.57\%                        & {\ul 47.82\%}                                                                      & {\ul 55.88\%}                                                                         & {\ul 85.94\%}                 & 79.29\%              \\
                                                 & \nmixup~(2.8)            & {\ul 83.50\%}                  & \textbf{48.91\%}                                                                   & \textbf{56.41\%}                                                                      & \textbf{96.88\%}              & \textbf{82.86\%}                 \\
                                                 & \nmixup~(4.0)            & \textbf{83.94\%}               & 46.93\%                                                                            & 55.45\%                                                                               & {\ul 85.94\%}                 & {\ul 81.79\%}                       \\ \hdashline 
\multicolumn{1}{c}{\parbox[t]{2mm}{\multirow{5}{*}{\rotatebox[origin=c]{90}{ResNet-50}}}} & \multicolumn{1}{c}{ERM} & \multicolumn{1}{c}{83.24\%}   & \multicolumn{1}{c}{42.15\%}                                                       & \multicolumn{1}{c}{74.64\%}                                                          & \multicolumn{1}{c}{84.38\%}  & \multicolumn{1}{c}{55.46\%}  \\ 
\multicolumn{1}{c}{}                           & \mixup~                   & 84.37\%                        & 45.57\%                                                                            & 62.08\%                                                                               & {\ul 85.94\%}                 & \textbf{81.79\%}                 \\
\multicolumn{1}{c}{}                           & \nmixup~(2.4)            & {\ul 86.26\%}                  & {\ul 46.63\%}                                                                      & \textbf{64.59\%}                                                                      & \textbf{87.50\%}              & {\ul 80.71\%}                       \\
\multicolumn{1}{c}{}                           & \nmixup~(2.8)            & 85.91\%                        & \textbf{48.36\%}                                                                   & {\ul 62.98\%}                                                                         & \textbf{87.50\%}              & \textbf{81.79\%}              \\
\multicolumn{1}{c}{}                           & \nmixup~(4.0)            & \textbf{88.16\%}               & 45.95\%                                                                            & 62.58\%                                                                               & \textbf{87.50\%}              & {\ul 80.71\%}                       \\ 
\bottomrule
\end{tabular}%
}
\end{table*}
    
    Next, we set out to quantify \textbf{(i) realism} and \textbf{(ii) label correctness} of \mixup~and \nmixup-synthesized images.
    To this end, we assume access to an Oracle that can recognize MNIST digits. For \textbf{(i)}, we hypothesize that the more an image is realistic, the more the Oracle will be certain about the digit in it, and vice-versa. 

    For example, although the first image in \fig\ref{fig:mnist_vis} (d) is a combination of a ``1" and a ``5", the resulting image looks very similar to a realistic handwritten ``8". On the other hand, consider the highlighted and zoomed digits in \fig\ref{fig:mnist_vis} (b). For an Oracle, images like these are ambiguous and do not belong to one particular class. Consequently, the uncertainty of the Oracle's prediction will be high.
    We therefore adopt the Oracle's entropy ($\mathcal{H}$)
    as a proxy for realism. 
    For \textbf{(ii)}, we use cross entropy (CE) to compare the soft labels assigned by either \mixup~or \nmixup~to the label assigned by the Oracle. 
    For example, if the resulting digit in a synthesized image is deemed an ``8" to an Oracle and the label assigned to the sample, by \mixup~or \nmixup, is also ``8", then the CE is low and the label is correct. 
    We also note that for the Oracle, the certainty of the predictions is correlated with the correctness of label. Finally, to address the issue of what Oracle to use, we adopt a highly accurate LeNet-5~\cite{lecun1998gradient} MNIST digit classifier that achieves $99.31\%$ classification accuracy on the standardized MNIST test set.
    
    \fig\ref{fig:mnist_vis} (f) and (g) show the quantitative results for the realism ($\propto$ 1/$\mathcal{H}$) of \mixup~and \nmixup's outputs, and the correctness of the corresponding labels ($\propto$ 1/CE) as evaluated by the Oracle, respectively, using kernel density estimate (KDE) plots with normalized areas. For both metrics, lower values (along the horizontal axes) are better. In \fig\ref{fig:mnist_vis} (f), we observe the \nmixup~has a higher peak for low values of entropy as compared to \mixup, indicating that the former generates more realistic samples. The inset figure therein shows the same plot with a logarithmic scale for the density, and \nmixup's improvements over \mixup~for higher values of entropy are clearly discernible here. Similarly, in \fig\ref{fig:mnist_vis} (g), we see that the cross entropy values for \nmixup~are concentrated around 0, whereas those for \mixup~are spread out more widely, implying that the former produces fewer samples with label error. If we restrict our samples to only those whose entropy of Oracle's predictions was less than $0.1$, meaning they were highly realistic samples, the label correctness distribution remains similar as shown in the inset figure, \ie  \mixup's outputs that look realistic are more likely to exhibit label error.

\begin{table*}[ht]
\centering
\caption{Classification performance (AUC and ACC) evaluated on MEDMNIST.
}
\label{tab:MedMNIST_results}
\resizebox{0.9\textwidth}{!}{%
\setlength{\tabcolsep}{0.5em}
\begin{tabular}{ccccccccc}
\toprule
\textbf{Dataset} & \multicolumn{2}{c}{PathMNIST} & \multicolumn{2}{c}{DermaMNIST} & \multicolumn{2}{c}{OCTMNIST} & \multicolumn{2}{c}{PneumoniaMNIST} \\ 
{\#images (\#classes)} & \multicolumn{2}{c}{107,180 (9)} & \multicolumn{2}{c}{10,005 (7)} & \multicolumn{2}{c}{109,309 (4)} & \multicolumn{2}{c}{5856 (2)} \\ 
\textbf{Method} & AUC & ACC & AUC & ACC & AUC & ACC & AUC & ACC \\ \hline
ERM & 0.962 & 84.4\% & 0.899 & 72.1\% & \textbf{0.951} & 70.8\% & 0.947 & 80.3\% \\ 
\mixup & 0.959 & 77.5\% & 0.897 & 72.2\% & 0.945 & 70.5\% & 0.945 & 75.4\% \\ 
\nmixup~($\pvar=2.8$) & \textbf{0.969} & \textbf{87.6\%} & \textbf{0.911} & \textbf{73.3\%} & 0.918 & \textbf{72.8\%} & \textbf{0.951} & \textbf{80.9\%} \\ 
\midrule
\textbf{Dataset} & \multicolumn{2}{c}{BreastMNIST} & \multicolumn{2}{c}{OrganMNIST\_A} & \multicolumn{2}{c}{OrganMNIST\_C} & \multicolumn{2}{c}{OrganMNIST\_S} \\ 
 {\#images (\#classes)} & \multicolumn{2}{c}{780 (2)} & \multicolumn{2}{c}{58,850 (11)} & \multicolumn{2}{c}{23,660 (11)} & \multicolumn{2}{c}{25,221 (11)} \\ 
\textbf{Method} & AUC & ACC & AUC & ACC & AUC & ACC & AUC & ACC \\ \hline
ERM & 0.897 & 85.9\% & 0.995 & 92.1\% & 0.990 & 88.9\% & 0.967 & 76.2\% \\ 
\mixup & 0.914 & 76.2\% & 0.995 & \textbf{93.1\%} & 0.990 & 89.9\% & 0.966 & 72.7\% \\ 
\nmixup~($\pvar=2.8$) & \textbf{0.928} & \textbf{87.2\%} & \textbf{0.996} & 92.7\% & \textbf{0.991} & \textbf{91.0\%} & \textbf{0.969} & \textbf{77.1\%} \\ 
\bottomrule
\end{tabular}%
}
\end{table*}

\subsection{Diversity}  \label{subsec:diversity}
We can control the diversity of \nmixup's output by changing $N$, \ie the number of points used as input to \nmixup, and the hyperparameter $\pvar$. As the value of $\pvar$ increases, the resulting distribution of the sampled points approaches the original data distribution. For example, in \fig\ref{fig:distribution_vis} (a), we see that changing $\pvar$ leads to an interpolation between \mixup-like and the original input-like distributions. Similarly, in \fig\ref{fig:distribution_vis} (c), we can see the effects of varying the batch size $N$ (\ie the number of input samples used to synthesize new samples) and $\pvar$. As $N$ increases, more original samples are used to generate the synthetic samples, and therefore the synthesized samples allow for a wider exploration of the space around the original samples. This effect is more pronounced with smaller values of $\pvar$ because with the weight assigned to one point, while still dominating all other weights, is not large enough to pull the synthetic sample close to it. This, along with fewer points to compute the weighted average of, leads to samples being generated farther from the original distribution as $\pvar$ decreases. On the other hand, as $\pvar$ increases, the contribution of one sample gets progressively larger, and as a result, the effect of a large $\pvar$ overshadows the effect of $N$.

\subsection{Richness of Labels}
The third desirable property of synthetic data is that, not only the generated samples should be able to capture and reflect the diversity of the original dataset, but also build upon it and extend it. As discussed in Sec.~\ref{sec:method}, for a single value of $\lambda$, \mixup~ generates 1 synthetic sample for every pair of original samples. In contrast, given a single value of $\pvar$ and $N$ original samples, \nmixup~can generate $N!$ new samples. The richness of the generated labels in \nmixup~comes from the fact that, unlike \mixup~whose outputs lie anywhere on the straight line between the original 2 samples, \nmixup~generates samples which are close to the original samples (as discussed in ``Realism" above) while still incorporating information from the original $N$ samples. As a case in point, consider the visualization of the soft labels in \mixup's and \nmixup's outputs on the MNIST dataset. Examining  \fig\ref{fig:mnist_vis} (b,d) again, we note \mixup's outputs are only made up of inputs from at most 2 classes. On the other hand, because of \nmixup's formulation, the outputs of \nmixup~can be made up of inputs from up to $min\left(N, \mathcal{K}\right)$ classes. This can also be seen in \nmixup's outputs in \fig\ref{fig:mnist_vis} (e): while the probability of one class dominates all others (see Theorem~\ref{theorem:pvalmin}), inputs from multiple classes, in addition to the dominant class, contribute to the final output and therefore this is reflected in the soft labels, leading to richer labels with information from multiple classes in 1 synthetic sample, which in turn arguably allow models trained on these samples to better learn the class decision boundaries.

\subsection{Preserving the Intrinsic Dimensionality of the Original Data}

As a direct consequence of the realism of synthetic data discussed above and its relation to the data manifold, we evaluate how the intrinsic dimensionality (ID hereafter) of the datasets change when \mixup~and \nmixup~are applied. 
With our 3D manifold visualizations in \fig\ref{fig:distribution_vis} (b), we saw that \mixup~samples points off the data manifold while \nmixup~limits the exploration of the high dimensional space, thus maintaining a lower ID. In order to substantiate this claim with quantitative results, we estimate the IDs of several datasets, both synthetic and real-world, and compare how the IDs of \mixup-~and \nmixup-generated distributions compare to those of the respective original distributions. For synthetic data, we use the high dimensional datasets described in Sec.~\ref{subsec:synth_data}, \ie 1-D helical manifolds embedded in $\mathbb{R}^3$ and in $\mathbb{R}^{12}$. For real-world datasets, we use the entire training partitions (50,000 images) of CIFAR-10 and CIFAR-100 datasets.
For each point in all the 4 datasets, the local measure of the ID (local ID hereafter) is calculated using a $k$-nearest neighborhood around each point with $k=8$ and $k=128$~\cite{bac2021scikit,fukunaga1971algorithm}. The means and the standard deviations of the local ID estimates for all the datasets: original data distribution, \mixup's output, and \nmixup's outputs for $\pvar \in [0, 15]$, are visualized in \fig\ref{fig:localID}.

The results in \fig\ref{fig:localID} support the observations from the discussion around the realism (Sec.~\ref{subsec:realism}) and the diversity (Sec.~\ref{subsec:diversity}) of outputs. In particular, notice how \mixup's off-manifold sampling leads to an inflated estimate of the local ID, whereas the local ID of \nmixup's output is lower than that of \mixup~and, as expected, can be controlled using $\pvar$. This difference is even more apparent with real-world high dimensional (3072-D) datasets, \ie CIFAR-10 and CIFAR-100, where for all values of $\pvar \ge \pvarmin$ (Theorem \ref{theorem:pvalmin}), as $\pvar$ increases, the local ID of \nmixup's output drops dramatically, meaning the resulting distributions lie on progressively lower dimensional intrinsic manifolds.

\subsection{Evaluation on Downstream Task: Classification}  \label{subsec:downstream}

Table~\ref{tab:natural_resultsV2} contains the performance evaluation of models trained using traditional data augmentation techniques, \eg rotation, flipping, and cropping, (``ERM"), and \mixup's and \nmixup's outputs from natural image datasets. For \nmixup, we choose 3 values of $\pvar$: $2.4$ (to allow exploration of the space around the original data manifold), $4.0$ (to restrict the synthetic samples to be close to the original samples), and $2.8$ (to allow for a behavior that permits exploration while still restricting the points to a small region around the original distribution). We see that 17 of the 18 models in Table~\ref{tab:natural_resultsV2} trained with \nmixup~outperform their ERM and \mixup~counterparts, with the lone exception being a model that is as accurate as \mixup. Next, Table~\ref{tab:skin_results_narrow} shows the performance of the models on the 10 skin lesion image diagnosis datasets 
($\pvar=\{2.4, 2.8, 4.0\}$). For both ResNet-18 and ResNet-50 and for all the 10 SKIN datasets, \nmixup~outperforms both \mixup~and ERM on skin lesion diagnosis tasks. Finally, Table~\ref{tab:MedMNIST_results} presents the quantitative evaluation on the 8 classification datasets from the MedMNIST collection, but use \nmixup~only with $\pvar=2.8$. In 6 out of the 8 datasets, \nmixup~outperforms both \mixup~and ERM, and in the other 2, \nmixup~achieves the highest value for 1 metric out of 2 each.

Note that these selected values of $\pvar$ can be changed to other reasonable values (please see the Appendix for sensitivity analysis of $\pvar$), and as shown above qualitatively and quantitatively, the desirable properties of \nmixup~hold for all values of $\pvar \geq \pvarmin$. Consequently, our quantitative results on classification tasks on 24 datasets show that \nmixup~outperforms ERM and \mixup~for all the datasets and in most cases, using all the 3 selected values of $\pvar$.

\subsection{Computational Efficiency}

\nmixup's PyTorch~\cite{paszke2019pytorch} implementation is provided in the Appendix. Our benchmarking experiments (Appendix) show that training DNNs for downstream tasks (Sec.~\ref{subsec:downstream}) with \nmixup~is at least as fast as \mixup, and for augmenting batches of 32 RGB images of $224 \times 224$ resolution, \nmixup~is over $2\times$ faster than \mixup.

\section{Conclusion}
We proposed \nmixup, a multi-sample generalization of the popular \mixup~technique for data augmentation that uses the terms of a truncated Riemann zeta function to combine $N\ge2$ samples of original dataset. We presented theoretical proofs that \mixup~is a special case of \nmixup~(when $N$=2 and with a specific setting of \nmixup's hyperparameter $\pvar$) and that the \nmixup~formulation allows for the weight assigned to one sample to dominate all the others, thus ensuring the synthesized samples are on or close to the original data manifold. The latter property leads to generating samples that are more realistic and, along with allowing $N > 2$, generates more diverse samples with richer labels as compared to their \mixup~counterparts.
We presented extensive experimental evaluation on controlled synthetic (1-D manifolds in 2-D and 3-D; 3-D manifolds in 12-D) and 24 real-world (natural and medical) image datasets of various modalities. We demonstrated quantitatively that, compared to \mixup: \nmixup~better preserves the intrinsic dimensionality of the original datasets; provides higher levels of realism and label correctness; and achieves stronger performance (i.e., higher accuracy) on multiple downstream classification tasks. Future work will include exploring \nmixup~in the learned feature space, although opinions on the theoretical justifications for interpolating in the latent space are not yet converged~\cite{cho2021manifold}.





{
\small

\bibliographystyle{plain}
\bibliography{main}



}

\clearpage
\pagenumbering{arabic}
\setcounter{table}{0}
\setcounter{figure}{0}
\renewcommand{\thetable}{A\arabic{table}}
\renewcommand{\thefigure}{A\arabic{figure}}
\renewcommand{\thelstlisting}{A\arabic{lstlisting}}
\appendix

\section*{\Large{Appendix}}

\textbf{List of contents:}

\begin{itemize}
    \item \textbf{Appendix~\ref{appndx:proofs}}: Proofs of Theorems~\ref{theorem:pvalmin} and \ref{theorem:special_case}.
    \item \textbf{Appendix~\ref{appndx:implementation}}: PyTorch implementation and benchmarking of \nmixup.
    \item \textbf{Appendix~\ref{appndx:ID_estimation}}: Details about local intrinsic dimensionality estimation.
    \item \textbf{Appendix~\ref{appndx:training_details}}: Details about datasets and model training for classification tasks.
    \item \textbf{Appendix~\ref{appndx:hyperparam_exps}}: Hyperparameter sensitivity analysis results on CIFAR-10 and CIFAR-100.
    \item \textbf{Appendix~\ref{appndx:skin_detailed_results}}: Detailed quantitative results on SKIN.
\end{itemize}

\section{Theorem Proofs}    \label{appndx:proofs}

\begin{thm}
For $\gamma \geq \pvarmin = \pvalmin$, the weight assigned to one sample dominates all other weights, i.e., $\forall \ \gamma \geq \pvalmin$,
\begin{equation}
    w_1 > \sum_{i=2}^N w_i
\end{equation}
\end{thm}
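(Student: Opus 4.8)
The plan is to reduce the statement to an inequality on the normalization constant $C$ and then to a monotonicity property of the Riemann zeta function. First I would observe that among the $N$ weights $w_i = s_i^{-\gamma}/C$, the largest is the one whose permuted rank is $1$, since $j \mapsto j^{-\gamma}$ is strictly decreasing for $\gamma > 0$; after relabeling so that this is $w_1$, we have $w_1 = 1^{-\gamma}/C = 1/C$. Since $\sum_{i=1}^N w_i = 1$ by construction, $\sum_{i=2}^N w_i = 1 - w_1$, so the claimed inequality $w_1 > \sum_{i=2}^N w_i$ is equivalent to $2w_1 > 1$, i.e. to $C < 2$. Thus the whole theorem is the single scalar inequality $C = \sum_{j=1}^N j^{-\gamma} < 2$, to be shown for all $N \ge 2$ and all $\gamma \ge \pvarmin$.

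Next I would bound $C$ uniformly in $N$. For $\gamma > 1$ the series $\zeta(\gamma) = \sum_{j=1}^\infty j^{-\gamma}$ converges, and since every omitted term is strictly positive, $C < \zeta(\gamma)$ for every finite $N$. Hence it suffices to prove $\zeta(\gamma) \le 2$ whenever $\gamma \ge \pvarmin$. Here I would use that $\zeta$ is continuous and strictly decreasing on $(1,\infty)$ — strict monotonicity follows term by term, as each $j^{-\gamma}$ with $j \ge 2$ is strictly decreasing in $\gamma$ — together with $\lim_{\gamma \to 1^+}\zeta(\gamma) = \infty$ and $\lim_{\gamma \to \infty}\zeta(\gamma) = 1$. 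By the intermediate value theorem and strict monotonicity, the equation $\zeta(\gamma) = 2$ has a unique solution, which is precisely how $\pvarmin$ is defined; its numerical value is $\pvarmin \approx \pvalmin$, obtained by any root finder. Monotonicity then gives $\gamma \ge \pvarmin \implies \zeta(\gamma) \le 2$, so $C < \zeta(\gamma) \le 2$, hence $w_1 = 1/C > 1/2$ and $\sum_{i=2}^N w_i = 1 - w_1 < 1/2 < w_1$, as required.

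There is no deep obstacle here; the only non-elementary ingredient is identifying the threshold $\pvarmin$ with the transcendental equation $\zeta(\gamma) = 2$ and verifying it is a well-defined unique value via the monotonicity/limit argument above. Two minor points deserve care. First, the boundary case $\gamma = \pvarmin$ is still covered with a strict inequality, because $C < \zeta(\pvarmin) = 2$ for every finite $N$; equality $C = 2$ is only approached in the limit $N \to \infty$, so $\pvarmin$ is exactly the sharp threshold that works for all batch sizes simultaneously. Second, one could instead bound $C$ crudely by the integral test, $C < 1 + \int_1^\infty x^{-\gamma}\,dx = 1 + \tfrac{1}{\gamma-1}$, which already yields the (suboptimal) sufficient condition $\gamma > 2$; recovering the sharp constant $\pvarmin$ rather than $2$ is the reason the zeta-function formulation is needed. \qedblack
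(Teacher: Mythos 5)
Your proof is correct and follows essentially the same route as the paper's: both reduce the claim to the scalar inequality $\sum_{j=1}^{N} j^{-\gamma} < 2$, bound the truncated sum by the full Riemann zeta function, and identify $\pvarmin = \pvalmin$ as the numerically computed root of $\zeta(\gamma)=2$. Your write-up is in fact slightly more careful than the paper's, since you make the finite-$N$ versus infinite-series normalization explicit and justify the monotonicity of $\zeta$ needed to pass from the threshold to all $\gamma \geq \pvarmin$.
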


\begin{proof}
Let us consider the case when $N \to \infty$. We need to find the value of $\pvar$ such that
\begin{align}
    w_1 &> \sum_{i=2}^\infty w_i \\
    \Rightarrow \frac{1^{-\pvar}}{C} &> \sum_{i=2}^\infty \frac{i^{-\pvar}}{C} ; \ \ C = \sum_{j=1}^\infty j^{-\pvar} \\
    \Rightarrow 1^{-\pvar} &> \sum_{i=2}^\infty i^{-\pvar} \ (\mathrm{since } \ C > 0) \\
    \Rightarrow 1^{-\pvar} + 1^{-\pvar} &> 1^{-\pvar} + \sum_{i=2}^\infty i^{-\pvar} \\
    \Rightarrow 2 &> \sum_{i=1}^\infty i^{-\pvar}
\end{align}

Note that $\sum_{i=1}^\infty i^{-\pvar} = \zeta(\pvar)$ is the Riemann zeta function at $\pvar$. Using a solver, we get $\pvar \geq \pvalmin$. Therefore, $\forall \ \gamma \geq \pvarmin = \pvalmin$,
\begin{align}
    w_1 > \sum_{i=2}^\infty w_i > \sum_{i=2}^N w_i \Rightarrow w_1 > \sum_{i=2}^N w_i.
\end{align}
\end{proof}

\begin{thm}
For $N = 2$ and $\pvar= \log_2 \left(\frac{\lambda}{1-\lambda}\right)$, \mixup~simplifies to \nmixup.
\end{thm}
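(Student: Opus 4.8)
The plan is to directly specialize the \nmixup~weighting scheme to $N = 2$ and show that the resulting pair of weights coincides with \mixup's $(\lambda, 1-\lambda)$ under the stated reparametrization. First I would substitute $N = 2$ into the normalization constant, obtaining $C = \sum_{j=1}^{2} j^{-\pvar} = 1 + 2^{-\pvar}$. Since a $2 \times 2$ permutation matrix $\pi$ produces a randomized ordering $s$ equal to either $(1,2)^T$ or $(2,1)^T$, the multiset $\{w_1, w_2\}$ is in both cases $\bigl\{ \tfrac{1}{1 + 2^{-\pvar}},\ \tfrac{2^{-\pvar}}{1 + 2^{-\pvar}} \bigr\}$, with the permutation merely deciding which of the two input samples receives the larger weight. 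Note that both weights are nonnegative and sum to $1$ for every real $\pvar$, so the $p$-series convergence condition $\pvar \ge 1$ (which only matters as $N \to \infty$) is irrelevant here.

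Next I would set $\lambda := \tfrac{1}{1 + 2^{-\pvar}}$, so that $1 - \lambda = \tfrac{2^{-\pvar}}{1 + 2^{-\pvar}}$, and substitute into \eqn\ref{eqn:nmixup} to recover exactly $x_k = \lambda x_i + (1-\lambda) x_j$ and $y_k = \lambda y_i + (1-\lambda) y_j$, which is precisely \eqn\ref{eqn:mixup}. To pin down the relation between the hyperparameters I would invert this: from $\lambda(1 + 2^{-\pvar}) = 1$ we get $2^{-\pvar} = \tfrac{1-\lambda}{\lambda}$, hence $2^{\pvar} = \tfrac{\lambda}{1-\lambda}$ and $\pvar = \log_2\!\bigl(\tfrac{\lambda}{1-\lambda}\bigr)$. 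Conversely, plugging this value of $\pvar$ back in gives $2^{-\pvar} = \tfrac{1-\lambda}{\lambda}$ and $C = 1 + \tfrac{1-\lambda}{\lambda} = \tfrac{1}{\lambda}$, so the weight attached to the index with $s_i = 1$ is $1/C = \lambda$ and the other is $2^{-\pvar}/C = 1 - \lambda$, recovering \mixup~in the other direction as well.

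There is essentially no hard step here — the argument is a short algebraic substitution — but two remarks deserve a sentence of care. First, the map $\lambda \mapsto \log_2(\lambda/(1-\lambda))$ is a bijection from $(0,1)$ onto $\mathbb{R}$, with $\lambda = \tfrac12 \leftrightarrow \pvar = 0$ (equal weights), $\lambda > \tfrac12 \leftrightarrow \pvar > 0$, and $\lambda < \tfrac12 \leftrightarrow \pvar < 0$; the degenerate endpoints $\lambda \in \{0,1\}$ simply reproduce an original sample and need no separate treatment. Second, I would observe that the randomness over the $N! = 2$ permutations in \nmixup~exactly mirrors \mixup's own symmetry under $\lambda \leftrightarrow 1-\lambda$ (equivalently $\pvar \leftrightarrow -\pvar$), so the two procedures induce the same distribution over synthetic samples, which is the precise sense in which \mixup~is the $N = 2$ special case of \nmixup.
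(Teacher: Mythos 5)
Your proposal is correct and follows essentially the same route as the paper's proof: specialize the weights to $N=2$ so that $w_1 = \frac{1}{1+2^{-\pvar}}$ and $w_2 = \frac{2^{-\pvar}}{1+2^{-\pvar}}$, identify $w_1 = \lambda$, and solve to obtain $\pvar = \log_2\left(\frac{\lambda}{1-\lambda}\right)$. Your additional remarks (the two permutations mirroring the $\lambda \leftrightarrow 1-\lambda$ symmetry, and the bijection of $\lambda \in (0,1)$ onto $\pvar \in \mathbb{R}$) are correct refinements but do not change the argument.
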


\begin{proof}
When $N=2$, \nmixup~(\eqn\ref{eqn:nmixup}) generates new samples by
\begin{equation}
    \begin{aligned}
    x_k = \sum_{i=1}^2 w_i x_i = w_1 x_1 + w_2 x_2 \\
    y_k = \sum_{i=1}^2 w_i y_i = w_1 y_1 + w_2 y_2,
    \end{aligned}
\end{equation}

\noindent where

\begin{equation}
    w_1 = \frac{1^{-\pvar}}{1^{-\pvar} + 2^{-\pvar}}; \ \ w_2 = \frac{2^{-\pvar}}{1^{-\pvar} + 2^{-\pvar}}.
\end{equation}

For this to be equivalent to \mixup~(\eqn\ref{eqn:mixup}), we should have

\begin{equation}
    w_1 = \lambda; \ \ w_2 = 1 - \lambda.
\end{equation}

Solving for $\pvar$, we have

\begin{align}
    w_1 &= \frac{1^{-\pvar}}{1^{-\pvar} + 2^{-\pvar}} = \lambda \\
    &\Rightarrow \frac{1}{1 + 2^{-\pvar}} = \lambda \\
    &\Rightarrow 2^{-\pvar} = \frac{1-\lambda}{\lambda} \\
    &\Rightarrow \pvar = -  \log_2 \left(\frac{1-\lambda}{\lambda}\right) = \log_2 \left(\frac{\lambda}{1-\lambda}\right).
\end{align}
\end{proof}






\section{\nmixup: Implementation and Benchmarking}  \label{appndx:implementation}


The \nmixup~implementation in PyTorch~\cite{paszke2019pytorch} is shown in Listing~\ref{listing:code} and in the \texttt{Appendix\_utils.py} file. Unlike \mixup~which performs scalar multiplications of $\lambda$ and $1-\lambda$ with the input batches, \nmixup~performs a single matrix multiplication of the input batches with the weights. With our optimized implementation, we find that model training times using \nmixup~are as fast as, if not faster than, those using \mixup~when evaluated on datasets with different spatial resolutions: CIFAR-10 ($32 \times 32$ RGB images), STL-10 ($96 \times 96$ RGB images), and Imagenette ($224 \times 224$ RGB images), as shown in Table~\ref{tab:benchmarks}. Moreover, when using \mixup~and \nmixup~on a batch of 32 tensors of $224 \times 224$ spatial resolution with 3 feature channels, which is the case with popular ImageNet-like training regimes, \nmixup~is over twice as fast as \mixup~and over 110 times faster than the original local synthetic instances implementation~\cite{brown2015prediction}. 

All models were trained and benchmarked on a workstation with Intel Core i9-9900K and 32 GB of memory with the Nvidia GeForce GTX TITAN X GPU with 12 GB of memory.

\begin{table*}[ht!]
\centering
\caption{Benchmarking \nmixup~against \mixup~for training models on CIFAR-10, STL-10, Imagenette, and for augmenting a batch of 32 RGB images of $224 \times 224$ spatial resolution.}
\label{tab:benchmarks}
\resizebox{0.9\textwidth}{!}{%
\setlength{\tabcolsep}{0.5em}
\def\arraystretch{1.5}
\begin{tabular}{cccccc}
\toprule
\multicolumn{2}{c}{Method}                                                                         & \begin{tabular}[c]{@{}c@{}}\textbf{CIFAR-10}\\ (200 epochs)\end{tabular} & \begin{tabular}[c]{@{}c@{}}\textbf{STL-10}\\ (200 epochs)\end{tabular} & \begin{tabular}[c]{@{}c@{}}\textbf{Imagenette}\\ (80 epochs)\end{tabular} & \begin{tabular}[c]{@{}c@{}}[32, 3, 224, 224]\\\texttt{torch.Tensor}\end{tabular} \\ \midrule 
\parbox[t]{2mm}{\multirow{3}{*}{\rotatebox[origin=c]{90}{Wall Time}}} & \mixup  & 1h 19m $\pm$ 30s & 24m 59s $\pm$ 16.9s & 45m 39s $\pm$ 8.5s  & 745$\mu$s $\pm$ 9.55$\mu$s                                               \\ 
\multicolumn{1}{c}{}                                                                     & \nmixup & 1h 21m $\pm$ 23s & 24m 58s $\pm$ 4.6s  & 45m 34s $\pm$ 14.1s & 345$\mu$s $\pm$ 2.53$\mu$s                                               \\ \cdashline{2-6} 
\multicolumn{1}{c}{}                                                                     & 
\begin{tabular}[c]{@{}c@{}}Local synthetic\\instances~\cite{brown2015prediction}\end{tabular}     & -                & -                   & -                   & 38.7ms $\pm$ 1.33 ms                                                     \\ 
\bottomrule
\end{tabular}%
}
\end{table*}

\begin{center}
\begin{minipage}[h!]{.9\linewidth}
\begin{lstlisting}[language=Python,caption=PyTorch-style implementation of \nmixup.,label=listing:code,captionpos=t,linewidth=10.3cm]
import torch.nn.functional as F

def zeta_mixup(X, Y, n_classes, weights):
    """
    X -> input feature tensor ([N, C, H, W])
    Y -> label tensor ([N, 1])
    weights -> weights tensor ([W, W])
    N: batch size; C: channels; H: height; W: width
    """
    # compute weighted average of all samples
    X_new = torch.einsum("ijkl,pi->pjkl", X, weights)
    
    # encode original labels to one-hot vectors
    Y_onehot = F.one_hot(Y, n_classes)
    # compute weighted average of all labels
    Y_new = torch.einsum("pq,qj->pj", weights, Y_onehot)
    
    # return synthesized samples and labels
    return X_new, Y_new

# Specify number of classes and training batch size
n_cls, b_size = 10, 32
    
# Random training batch constructed for illustration
x = torch.randn(b_size, 3, 224, 224).cuda()
y = torch.randint(0, (n_cls-1), (b_size,)).cuda()

# Generate weights using normalized p-series
weights = zeta_mixup_weights(batch_size=b_size).cuda()

# Perform zeta-mixup on the training batch
x_new, y_new = zeta_mixup(x, y, n_cls, weights)

\end{lstlisting}
\end{minipage}
\end{center}

\section{Intrinsic Dimensionality Estimation}   \label{appndx:ID_estimation}

While the ID of a dataset can be estimated globally, datasets can have heterogenous regions and thus consist of regions of varying IDs. As such, instead of a global estimate of the ID, a local measure of the ID (local ID hereafter), estimated in the local neighborhood of each point in the dataset with neighborhoods typically defined using the $k$-nearest neighbors, is more informative of the inherent organization of the dataset. For our local ID estimation experiments, we use a principal component analysis-based local ID estimator from the \texttt{scikit-dimension} Python library~\cite{bac2021scikit} using the Fukunaga-Olsen method~\cite{fukunaga1971algorithm}, where an eigenvalue is considered significant if it is larger than $5\%$ of the largest eigenvalue.

\section{Training Details for Classification Task Models}   \label{appndx:training_details}

\subsection{Natural Image Datasets}

MNIST and F-MNIST have $28 \times 28$ grayscale images. CIFAR-10 and CIFAR-100 datasets which have RGB images with $32 \times 32$ spatial resolution. STL-10 consists of RGB images with a higher $96 \times 96$ resolution and also has fewer training images than testing images per class. Released by Jeremy Howard to facilitate evaluation on natural images from the original ImageNet dataset~\cite{deng2009imagenet} but with more reasonable computational and time requirements, Imagenette and Imagewoof~\cite{howard2019imagenette} are 10-class subsets each of the ImageNet dataset. The list of ImageNet classes and the corresponding synset IDs from WordNet in both these datasets are shown in Table~\ref{tab:imagenet_subsets_classes}. Both the datasets have standardized training and validation partitions.




For all the 6 natural image datasets: CIFAR-10, CIFAR-100, F-MNIST, STL-10, Imagenette, and Imagewoof, we train and validate deep models with the ResNet-18 architecture~\cite{he2016deep} on the standard training and validation partitions and use random horizontal flipping for data augmentation.

For CIFAR-10, CIFAR-100, F-MNIST, and STL-10, the models are trained on the original image resolutions, whereas for Imagenette and Imagewoof, the images are resized to $224 \times 224$. For CIFAR-10, CIFAR-100, F-MNIST, the models are trained for 200 epochs with an initial learning rate of $0.1$, which is decayed by a multiplicative factor of $0.2$ at $80^{\mathrm{th}}$, $120^{\mathrm{th}}$, and $160^{\mathrm{th}}$ epochs, with batches of 128 images for CIFAR datasets and 32 images for F-MNIST. For STL-10, the models are trained for 120 epochs with a batch size of 32 and an initial learning rate of $0.1$, which is decayed by a multiplicative factor of $0.2$ at $80^{\mathrm{th}}$ epoch. Finally, for Imagenette and Imagewoof, the models are trained for 80 epochs with a batch size of 32 and an initial learning rate of $0.01$, which is decayed by a multiplicative factor of $0.2$ at $25^{\mathrm{th}}$, $50^{\mathrm{th}}$, and $65^{\mathrm{th}}$ epochs. All models are optimized using cross entropy loss and mini-batch stochastic gradient descent (SGD) with Nesterov momentum of $0.9$ and a weight decay of $5$e$-4$.

\begin{table*}[ht!]
\centering
\caption{List of classes from ImageNet and the corresponding WordNet synset IDs in Imagenette and Imagewoof datasets.}
\label{tab:imagenet_subsets_classes}
\resizebox{\textwidth}{!}{%
\setlength{\tabcolsep}{0.5em}
\def\arraystretch{1.5}
\begin{tabular}{cccccccccccc}
\toprule
{\multirow{2}{*}{\rotatebox[origin=c]{90}{\textbf{Imagenette}}}} & \begin{tabular}[c]{@{}c@{}}\textbf{ImageNet}\\\textbf{class}\end{tabular} & tench & \begin{tabular}[c]{@{}c@{}}English\\springer\end{tabular} & \begin{tabular}[c]{@{}c@{}}cassette\\player\end{tabular} & chain saw & church & \begin{tabular}[c]{@{}c@{}}French\\horn\end{tabular} & \begin{tabular}[c]{@{}c@{}}garbage\\truck\end{tabular} & gas pump & golf ball & parachute \\
 & \begin{tabular}[c]{@{}c@{}}\textbf{WordNet}\\\textbf{synset ID}\end{tabular} & n01440764 & n02102040 & n02979186 & n03000684 & n03028079 & n03394916 & n03417042 & n03425413 & n03445777 & n03888257 \\ 
 \midrule
{\multirow{2}{*}{\rotatebox[origin=c]{90}{\textbf{Imagewoof}}}} & \begin{tabular}[c]{@{}c@{}}\textbf{ImageNet}\\\textbf{class}\end{tabular} & \begin{tabular}[c]{@{}c@{}}Australian\\terrier\end{tabular} & \begin{tabular}[c]{@{}c@{}}Border\\terrier\end{tabular} & Samoyed & Beagle & Shih-Tzu & \begin{tabular}[c]{@{}c@{}}English\\foxhound\end{tabular} & \begin{tabular}[c]{@{}c@{}}Rhodesian\\ridgeback\end{tabular} & Dingo & \begin{tabular}[c]{@{}c@{}}Golden\\retriever\end{tabular} & \begin{tabular}[c]{@{}c@{}}Old English\\sheepdog\end{tabular} \\
 & \begin{tabular}[c]{@{}c@{}}\textbf{WordNet}\\\textbf{synset ID}\end{tabular} & n02096294 & n02093754 & n02111889 & n02088364 & n02086240 & n02089973 & n02087394 & n02115641 & n02099601 & n02105641 \\ 
 \bottomrule
\end{tabular}%
}
\end{table*}

\subsection{Skin Lesion Image Diagnosis Datasets}

Skin lesion imaging has 2 pre-dominant modalities: clinical images and dermoscopic images. While both capture RGB images, clinical images consist of close-up lesion images acquired with consumer-grade cameras, whereas dermoscopic images are acquired using a dermatoscope which allows for identification of detailed morphological structures~\cite{menzies2009dermoscopy} along with fewer imaging-related artifacts~\cite{kittler2002diagnostic}. 

For all the 10 skin lesion image diagnosis datasets, we train classification models with the ResNet-18 and the ResNet-50 architectures. For data augmentation, we take a square center-crop of the image with edge length equal to 0.8*\textit{min}(height, width) and then resize it to $256 \times 256$ spatial resolution. The ISIC 2016, 2017, and 2018 come with standardized partitions that we use for training and evaluating our models, and for the other 7 datasets, we perform a stratified split in the ratio of training : validation : testing :: $70:10:20$. For all the datasets, we use the 5-class diagnosis labels used in the original dataset paper and in the literature~\cite{kawahara2018seven,coppola2020interpreting,abhishek2021predicting}: ``basal cell carcinoma", ``nevus", ``melanoma", ``seborrheic keratosis", and ``others".

For all the datasets except ISIC 2018, we use a batch size of 32 images and train the models for 50 epochs with an initial learning rate of $0.01$, which was decayed by a multiplicative factor of $0.1$ every $10$ epochs. Given that the ISIC 2018 dataset is considerably larger, we train it for 20 epochs with 32 images in a batch and an initial learning rate of $0.01$, which was decayed by a multiplicative factor of $0.1$ every $4$ epochs. As with experiments with the natural image datasets, all models are optimized using cross entropy loss and SGD with Nesterov momentum of $0.9$ and a weight decay of $5$e$-4$.

\subsection{Datasets of Other Medical Imaging Modalities from MedMNIST}

For all the 8 datasets from the MedMNIST collection, we train and evaluate classification models with the ResNet-18 architecture on the standard training, validation, and testing partitions. The images are used in their original $28 \times 28$ spatial resolution.

For all the datasets, we use a learning rate of $0.01$ and following the original paper~\cite{yang2021medmnist}, we use cross entropy loss with SGD on batches of 128 images to optimize the classification models.

\section{\nmixup: Hyperparameter Sensitivity Analysis}   \label{appndx:hyperparam_exps}

\begin{figure*}[ht!]
     \centering
     \begin{subfigure}[t]{0.45\textwidth}
         \centering
         \includegraphics[width=\textwidth]{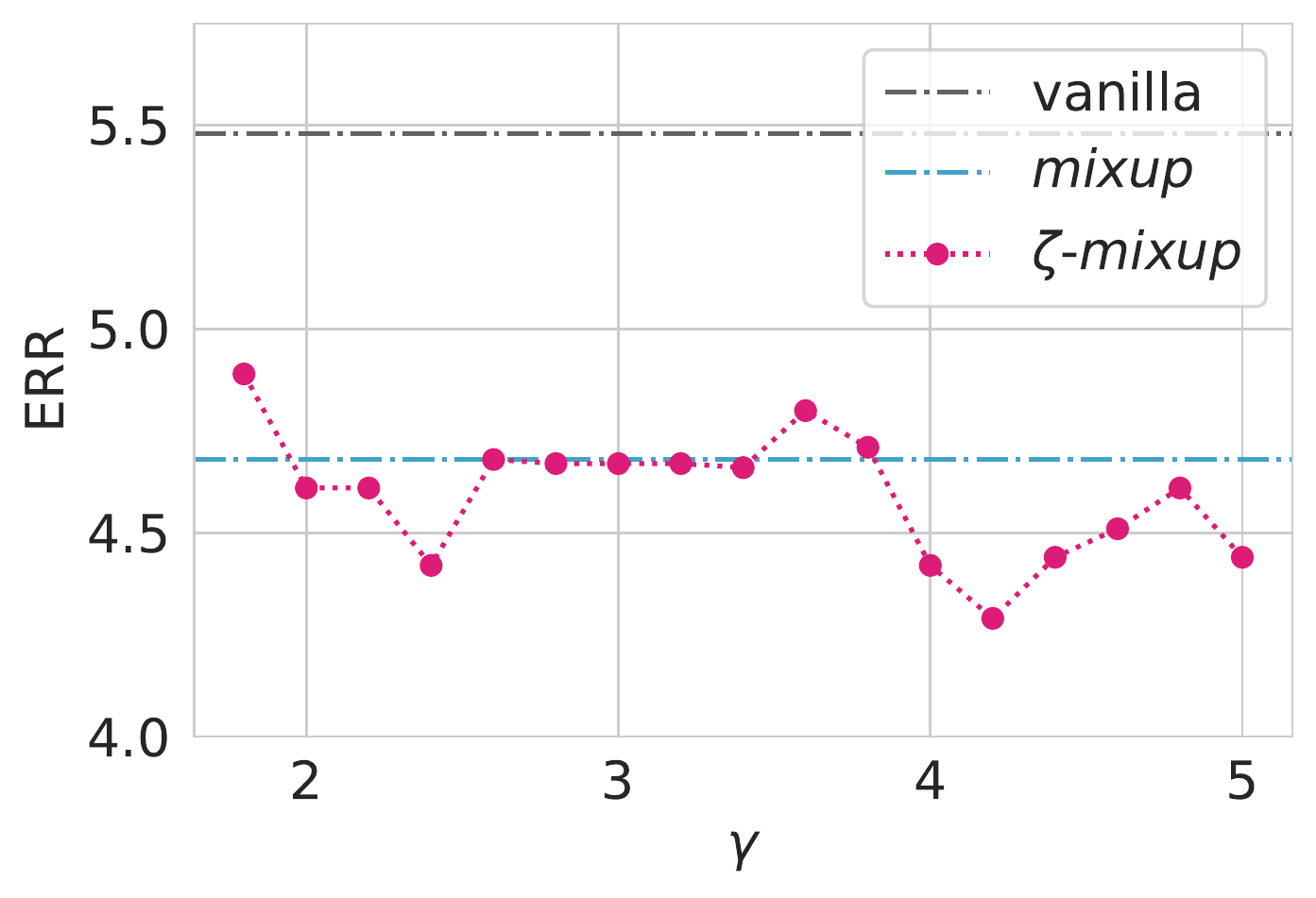}
         \caption{$\pvar$-sensitivity analysis (CIFAR-10.)}
     \end{subfigure}
     \hfill
     \begin{subfigure}[t]{0.45\textwidth}
         \centering
         \includegraphics[width=\textwidth]{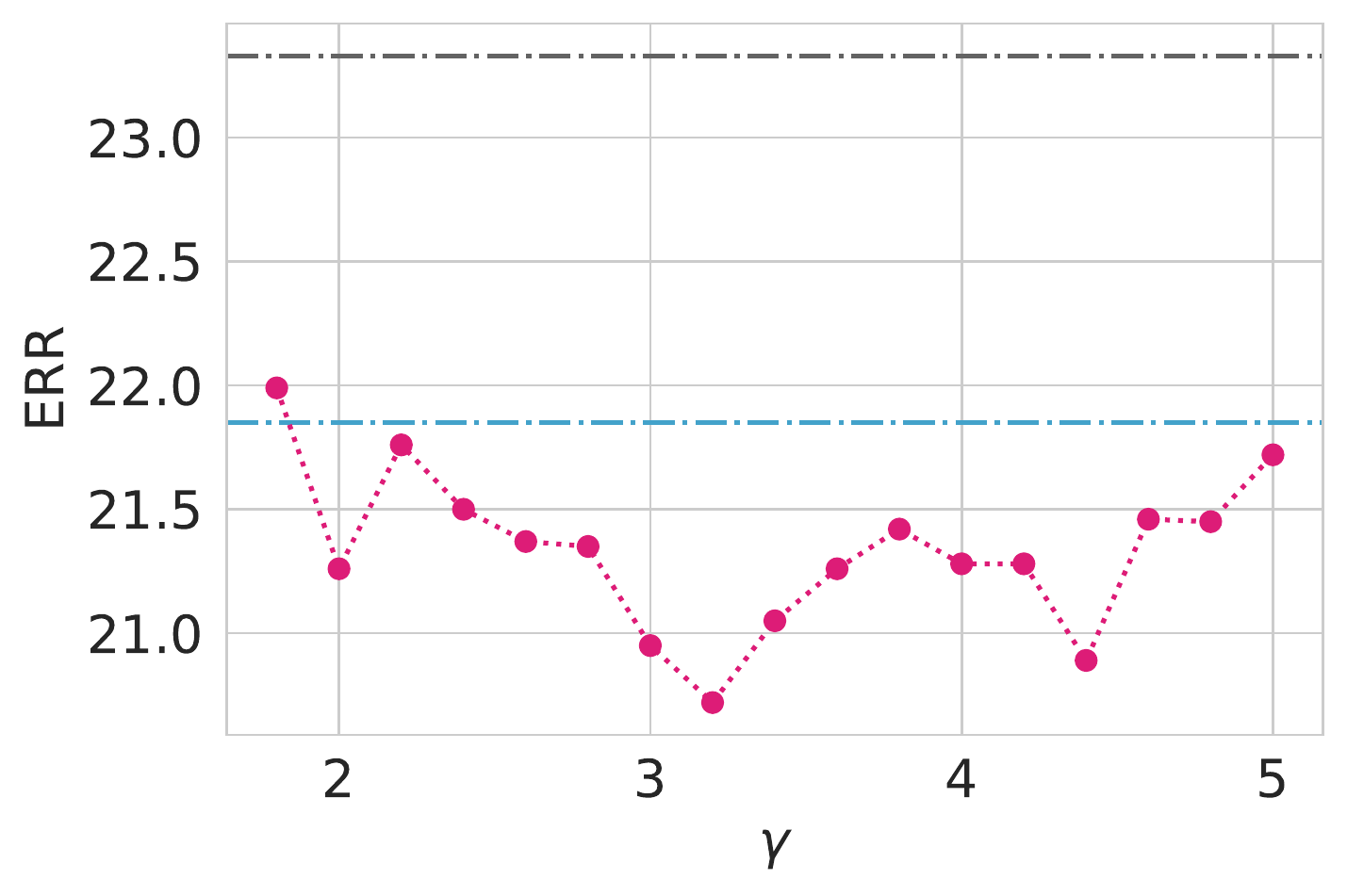}
         \caption{$\pvar$-sensitivity analysis (CIFAR-100).}
     \end{subfigure}
     \\
     \begin{subfigure}[t]{0.8\textwidth}
         \centering
         \includegraphics[width=\textwidth]{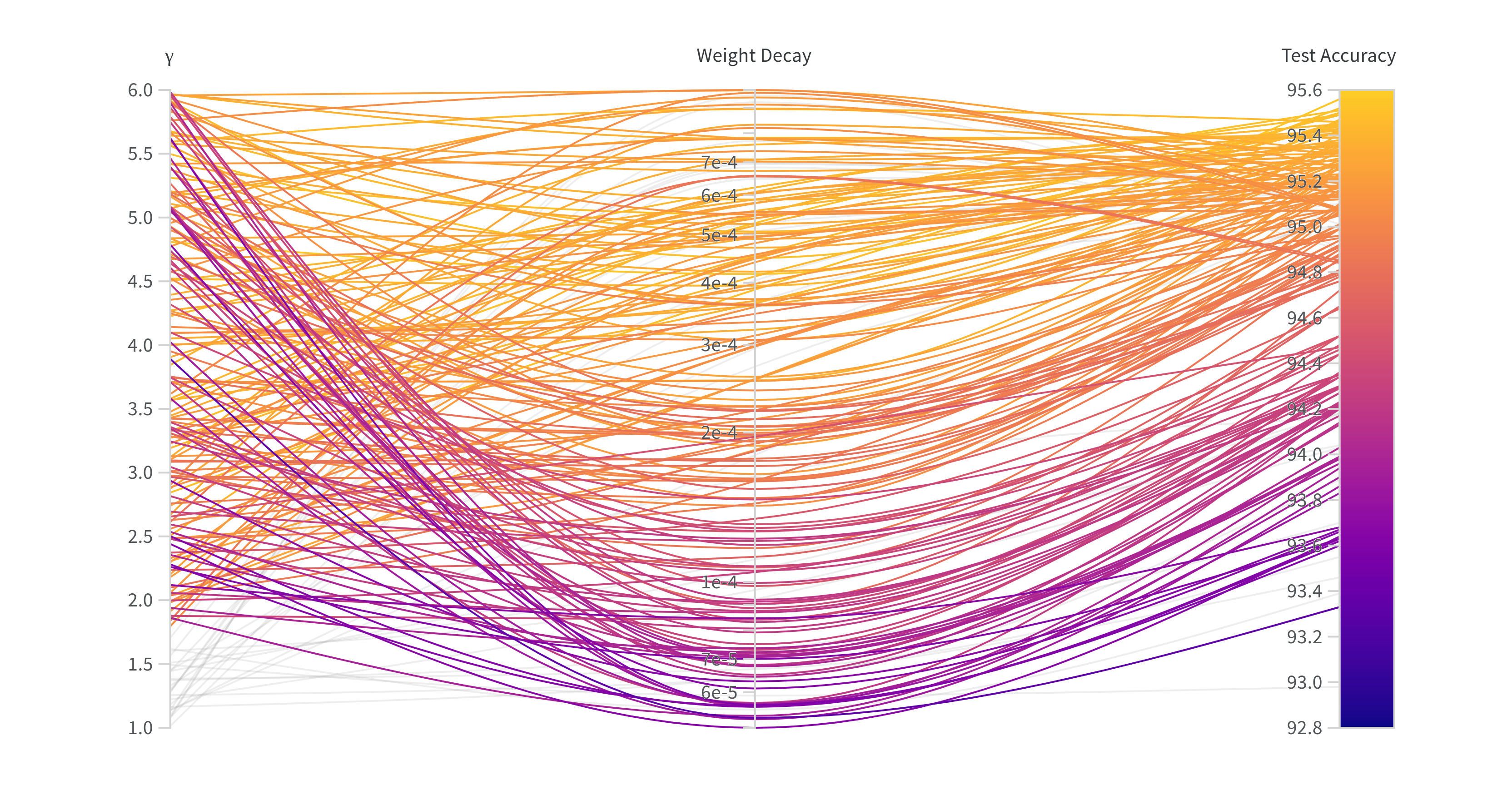}
         \caption{Hyperparameter sweeps for $\pvar$ and weight decay (CIFAR-10).}
     \end{subfigure}
     \\
      \begin{subfigure}[t]{0.8\textwidth}
         \centering
         \includegraphics[width=\textwidth]{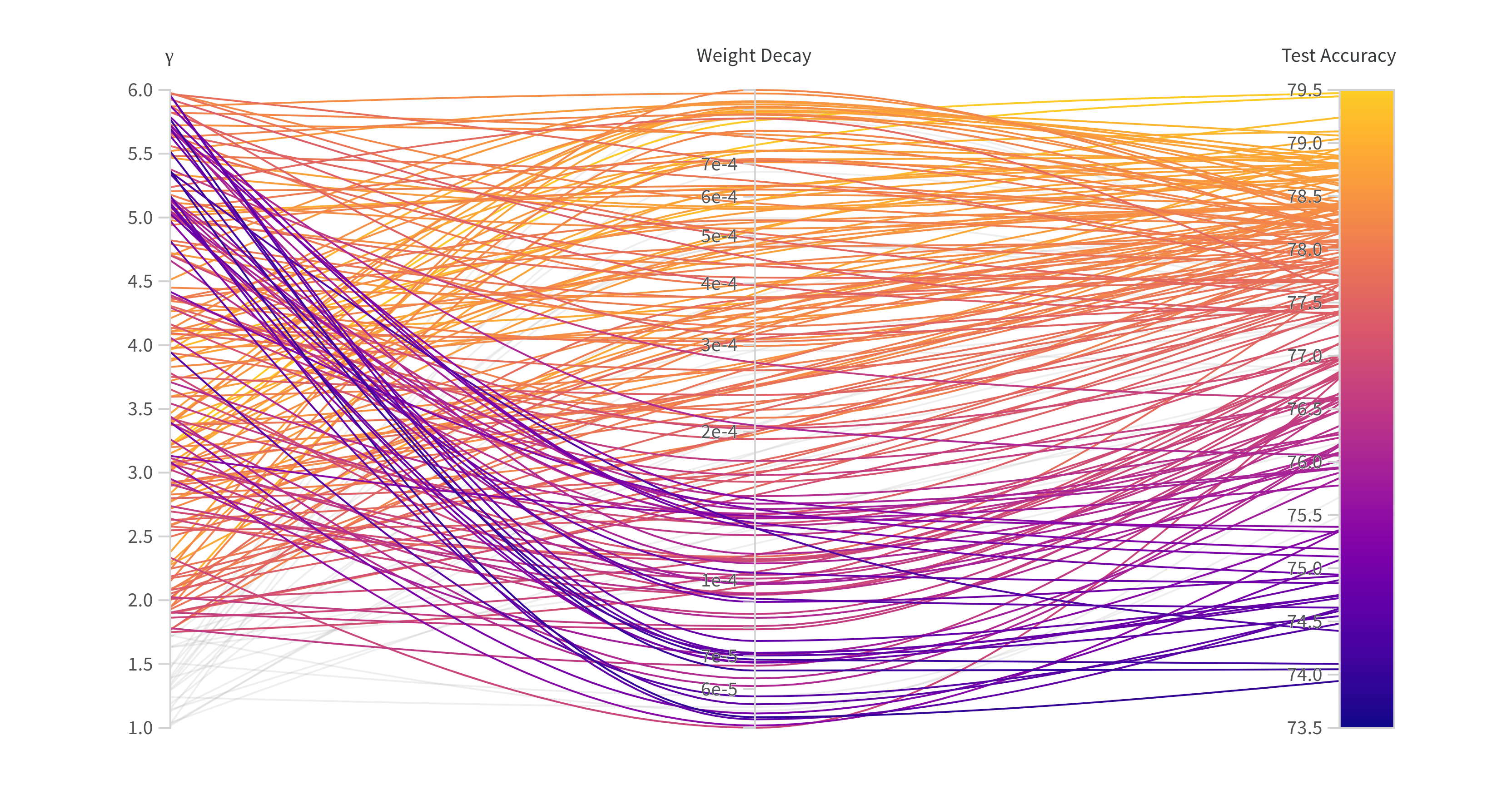}
         \caption{Hyperparameter sweeps for $\pvar$ and weight decay (CIFAR-100).}
     \end{subfigure}
        \caption{Hyperparameter sensitivity analysis for \nmixup~on CIFAR-10 and CIFAR-100. In (a, b), $\pvar$ is varied from $[1.8, 5.0]$ and the resulting ERR is shown. In (c, d), 200 models are trained by varying $\pvar$ uniformly in $[1.0, 6.0]$ and weight decay log-uniformly in $[5$e$-5, 1$e$-3]$ and the resulting test accuracies are plotted on the right-most column. Models with $\pvar < \pvarmin$ are shown in light gray.}
        \label{fig:sweep_vis}
\end{figure*}

We conduct extensive experiments on CIFAR-10 and CIFAR-100 datasets to analyze the effect of hyperparameters, particularly $\pvar$ and weight decay, on the performance of \nmixup.

First, we vary the hyperparameter $\pvar$ by choosing values from [1.8, 2.0, 2.2, $\cdots$, 5.0] and train and evaluate ResNet-18 models on CIFAR-10 and CIFAR-100. The corresponding overall error rates (ERR) are shown in \fig\ref{fig:sweep_vis} (a) and (b) respectively.

Next, we perform a hyperparameter sweep by varying both $\pvar$ and weight decay and training and evaluating ResNet-18 models on CIFAR-10 and CIFAR-100. We use {\ul Weights and Biases}\footnote{L. Biewald, “Experiment Tracking with Weights and Biases,” Weights \& Biases. [Online]. Available: \url{http://wandb.com/}. [Accessed: February 18, 2022].} to perform a Bayesian search over this hyperparameter space and sample $\pvar$ from a uniform distribution over [1.0, 6.0] and weight decay from a log-uniform distribution over [5e-5, 1e-3]. We perform 200 sweeps, effectively training 200 models, for both CIFAR-10 and CIFAR-100 each and plot the overall test accuracy in \fig\ref{fig:sweep_vis} (c) and (d) respectively. Models trained with $\pvar < \pvarmin$ are shown in light gray.

\section{Detailed Quantitative Results on Skin Lesion Diagnosis Datasets}   \label{appndx:skin_detailed_results}

\begin{table*}[ht!]
\centering
\caption{Classification performance evaluated on SKIN. \textsuperscript{$\dagger$} and \textsuperscript{$\ddagger$} denote dermoscopic and clinical skin lesion images respectively. The evaluation metrics are 
ACC$_{\mathrm{bal}}$, F1-micro, and F1-macro.
The highest and the second highest values of each metric have been formatted with \textbf{bold} and {\ul underline} respectively.}
\label{tab:skin_results}
\resizebox{\textwidth}{!}{%
\setlength{\tabcolsep}{0.35em}
\def\arraystretch{1.65}
\begin{tabular}{c|cccccc|cccccc}
\toprule
\textbf{Dataset}              & \multicolumn{6}{c|}{ISIC 2016}                                                                                                                                                             & \multicolumn{6}{c}{ISIC 2017}                                                                                                                                                                                       \\ 
\#images (\#classes) & \multicolumn{6}{c|}{1,279 (2)}                                                                                                                                                                                        & \multicolumn{6}{c}{2,750 (3)}                                                                                                                                                                                        \\ 
Method               & \multicolumn{3}{c:}{ResNet-18}                                                                                      & \multicolumn{3}{c|}{ResNet-50}                                                                 & \multicolumn{3}{c:}{ResNet-18}                                                                                      & \multicolumn{3}{c}{ResNet-50}                                                                 \\ 
                     & \multicolumn{1}{c}{ACC$_{\mathrm{bal}}$}        & \multicolumn{1}{c}{F1-micro}        & \multicolumn{1}{c}{F1-macro}        & \multicolumn{1}{c}{ACC$_{\mathrm{bal}}$}        & \multicolumn{1}{c}{F1-micro}        & F1-macro        & \multicolumn{1}{c}{ACC$_{\mathrm{bal}}$}        & \multicolumn{1}{c}{F1-micro}        & \multicolumn{1}{c}{F1-macro}        & \multicolumn{1}{c}{ACC$_{\mathrm{bal}}$}        & \multicolumn{1}{c}{F1-micro}        & F1-macro        \\ \hdashline 
ERM                  & \multicolumn{1}{c}{70.44\%}          & \multicolumn{1}{c}{0.7836}          & \multicolumn{1}{c:}{0.6865}          & \multicolumn{1}{c}{71.75\%}          & \multicolumn{1}{c}{0.8127}          & 0.7121          & \multicolumn{1}{c}{69.31\%}          & \multicolumn{1}{c}{0.7383}          & \multicolumn{1}{c:}{0.6720}          & \multicolumn{1}{c}{68.20\%}          & \multicolumn{1}{c}{0.6867}          & 0.6361          \\ 
\mixup~               & \multicolumn{1}{c}{71.77\%}          & \multicolumn{1}{c}{0.7968}          & \multicolumn{1}{c:}{0.7017}          & \multicolumn{1}{c}{72.08\%}          & \multicolumn{1}{c}{0.8179}          & 0.7175          & \multicolumn{1}{c}{71.60\%}          & \multicolumn{1}{c}{0.7333}          & \multicolumn{1}{c:}{0.6756}          & \multicolumn{1}{c}{71.51\%}          & \multicolumn{1}{c}{0.7433}          & 0.6979          \\ 
\nmixup~(2.4)        & \multicolumn{1}{c}{\textbf{74.53\%}} & \multicolumn{1}{c}{{\ul 0.8417}}    & \multicolumn{1}{c:}{{\ul 0.7180}}    & \multicolumn{1}{c}{71.52\%}          & \multicolumn{1}{c}{\textbf{0.8654}} & {\ul 0.7492}    & \multicolumn{1}{c}{\textbf{73.02\%}} & \multicolumn{1}{c}{0.7483}          & \multicolumn{1}{c:}{{\ul 0.6965}}    & \multicolumn{1}{c}{\textbf{72.91\%}} & \multicolumn{1}{c}{\textbf{0.7783}} & \textbf{0.7099} \\ 
\nmixup~(2.8)        & \multicolumn{1}{c}{{\ul 73.03\%}}    & \multicolumn{1}{c}{\textbf{0.8654}} & \multicolumn{1}{c:}{\textbf{0.7588}} & \multicolumn{1}{c}{\textbf{72.20\%}} & \multicolumn{1}{c}{{\ul 0.8602}}    & \textbf{0.7493} & \multicolumn{1}{c}{{\ul 72.33\%}}    & \multicolumn{1}{c}{\textbf{0.7633}} & \multicolumn{1}{c:}{\textbf{0.7068}} & \multicolumn{1}{c}{69.99\%}          & \multicolumn{1}{c}{{\ul 0.7733}}    & {\ul 0.7028}    \\ 
\nmixup~(4.0)        & \multicolumn{1}{c}{72.27\%}          & \multicolumn{1}{c}{0.7968}          & \multicolumn{1}{c:}{0.7043}          & \multicolumn{1}{c}{{\ul 72.11\%}}    & \multicolumn{1}{c}{0.8391}          & 0.7151          & \multicolumn{1}{c}{70.93\%}          & \multicolumn{1}{c}{{\ul 0.7567}}    & \multicolumn{1}{c:}{0.6815}          & \multicolumn{1}{c}{{\ul 72.39\%}}    & \multicolumn{1}{c}{0.7517}          & 0.6963          \\ 
\midrule

\textbf{Dataset}              & \multicolumn{6}{c|}{ISIC 2018}                                                                                                                                                             & \multicolumn{6}{c}{MSK}                                                                                                                                                                                             \\ 
\#images (\#classes) & \multicolumn{6}{c|}{10,015 (5)}                                                                                                                                                                                       & \multicolumn{6}{c}{3,551 (4)}                                                                                                                                                                                        \\ 
Method               & \multicolumn{3}{c:}{ResNet-18}                                                                                      & \multicolumn{3}{c|}{ResNet-50}                                                                 & \multicolumn{3}{c:}{ResNet-18}                                                                                      & \multicolumn{3}{c}{ResNet-50}                                                                 \\ 
                     & \multicolumn{1}{c}{ACC$_{\mathrm{bal}}$}        & \multicolumn{1}{c}{F1-micro}        & \multicolumn{1}{c:}{F1-macro}        & \multicolumn{1}{c}{ACC$_{\mathrm{bal}}$}        & \multicolumn{1}{c}{F1-micro}        & F1-macro        & \multicolumn{1}{c}{ACC$_{\mathrm{bal}}$}        & \multicolumn{1}{c}{F1-micro}        & \multicolumn{1}{c:}{F1-macro}        & \multicolumn{1}{c}{ACC$_{\mathrm{bal}}$}        & \multicolumn{1}{c}{F1-micro}        & F1-macro        \\ \hdashline 
ERM                  & \multicolumn{1}{c}{84.31\%}          & \multicolumn{1}{c}{0.8756}          & \multicolumn{1}{c:}{0.8122}          & \multicolumn{1}{c}{81.28\%}          & \multicolumn{1}{c}{0.8653}          & 0.7982          & \multicolumn{1}{c}{62.35\%}          & \multicolumn{1}{c}{0.6986}          & \multicolumn{1}{c:}{0.5999}          & \multicolumn{1}{c}{63.86\%}          & \multicolumn{1}{c}{0.7873}          & 0.6586          \\ 
\mixup~               & \multicolumn{1}{c}{83.96\%}          & \multicolumn{1}{c}{0.8394}          & \multicolumn{1}{c:}{0.7767}          & \multicolumn{1}{c}{85.65\%}          & \multicolumn{1}{c}{0.8601}          & 0.8064          & \multicolumn{1}{c}{63.59\%}          & \multicolumn{1}{c}{0.7423}          & \multicolumn{1}{c:}{0.6404}          & \multicolumn{1}{c}{{\ul 65.62\%}}    & \multicolumn{1}{c}{{\ul 0.7958}}    & 0.6434          \\ 
\nmixup~(2.4)        & \multicolumn{1}{c}{\textbf{87.20\%}} & \multicolumn{1}{c}{\textbf{0.8964}} & \multicolumn{1}{c:}{\textbf{0.8441}} & \multicolumn{1}{c}{84.75\%}          & \multicolumn{1}{c}{0.8653}          & 0.8112          & \multicolumn{1}{c}{\textbf{65.52\%}} & \multicolumn{1}{c}{{\ul 0.7746}}    & \multicolumn{1}{c:}{{\ul 0.6475}}    & \multicolumn{1}{c}{65.23\%}          & \multicolumn{1}{c}{\textbf{0.8056}} & \textbf{0.6875} \\ 
\nmixup~(2.8)        & \multicolumn{1}{c}{{\ul 84.67\%}}    & \multicolumn{1}{c}{0.8756}          & \multicolumn{1}{c:}{{\ul 0.8066}}    & \multicolumn{1}{c}{{\ul 86.59\%}}    & \multicolumn{1}{c}{{\ul 0.9016}}    & {\ul 0.8333}    & \multicolumn{1}{c}{{\ul 64.87\%}}    & \multicolumn{1}{c}{\textbf{0.7845}} & \multicolumn{1}{c:}{\textbf{0.6883}} & \multicolumn{1}{c}{\textbf{65.94\%}} & \multicolumn{1}{c}{0.7930}          & {\ul 0.6704}    \\ 
\nmixup~(4.0)        & \multicolumn{1}{c}{83.63\%}          & \multicolumn{1}{c}{{\ul 0.8808}}    & \multicolumn{1}{c:}{0.8062}          & \multicolumn{1}{c}{\textbf{89.18\%}} & \multicolumn{1}{c}{\textbf{0.9223}} & \textbf{0.8718} & \multicolumn{1}{c}{62.39\%}          & \multicolumn{1}{c}{0.6930}          & \multicolumn{1}{c:}{0.6006}          & \multicolumn{1}{c}{65.33\%}          & \multicolumn{1}{c}{0.7817}          & 0.6587          \\ 
\midrule

\textbf{Dataset}              & \multicolumn{6}{c|}{UDA}                                                                                                                                                                   & \multicolumn{6}{c}{DermoFit}                                                                                                                                                                                        \\ 
\#images (\#classes) & \multicolumn{6}{c|}{601 (2)}                                                                                                                                                                                         & \multicolumn{6}{c}{1,300 (5)}                                                                                                                                                                                        \\ 
Method               & \multicolumn{3}{c:}{ResNet-18}                                                                                      & \multicolumn{3}{c|}{ResNet-50}                                                                 & \multicolumn{3}{c:}{ResNet-18}                                                                                      & \multicolumn{3}{c}{ResNet-50}                                                                 \\ 
                     & \multicolumn{1}{c}{ACC$_{\mathrm{bal}}$}        & \multicolumn{1}{c}{F1-micro}        & \multicolumn{1}{c:}{F1-macro}        & \multicolumn{1}{c}{ACC$_{\mathrm{bal}}$}        & \multicolumn{1}{c}{F1-micro}        & F1-macro        & \multicolumn{1}{c}{ACC$_{\mathrm{bal}}$}        & \multicolumn{1}{c}{F1-micro}        & \multicolumn{1}{c:}{F1-macro}        & \multicolumn{1}{c}{ACC$_{\mathrm{bal}}$}        & \multicolumn{1}{c}{F1-micro}        & F1-macro        \\ \hdashline 
ERM                  & \multicolumn{1}{c}{67.46\%}          & \multicolumn{1}{c}{0.7000}          & \multicolumn{1}{c:}{0.6666}          & \multicolumn{1}{c}{66.85\%}          & \multicolumn{1}{c}{0.6917}          & 0.6593          & \multicolumn{1}{c}{80.43\%}          & \multicolumn{1}{c}{0.8269}          & \multicolumn{1}{c:}{0.8120}          & \multicolumn{1}{c}{83.24\%}          & \multicolumn{1}{c}{0.8500}          & 0.8316          \\ 
\mixup~               & \multicolumn{1}{c}{69.38\%}          & \multicolumn{1}{c}{0.7167}          & \multicolumn{1}{c:}{0.6851}          & \multicolumn{1}{c}{67.27\%}          & \multicolumn{1}{c}{0.7167}          & 0.6727          & \multicolumn{1}{c}{81.17\%}          & \multicolumn{1}{c}{0.8577}          & \multicolumn{1}{c:}{0.8302}          & \multicolumn{1}{c}{84.37\%}          & \multicolumn{1}{c}{0.8500}          & 0.8406          \\ 
\nmixup~(2.4)        & \multicolumn{1}{c}{\textbf{70.54\%}} & \multicolumn{1}{c}{\textbf{0.8000}} & \multicolumn{1}{c:}{\textbf{0.7272}} & \multicolumn{1}{c}{{\ul 68.39\%}}    & \multicolumn{1}{c}{0.7417}          & {\ul 0.6900}    & \multicolumn{1}{c}{82.57\%}          & \multicolumn{1}{c}{0.8692}          & \multicolumn{1}{c:}{0.8419}          & \multicolumn{1}{c}{{\ul 86.26\%}}    & \multicolumn{1}{c}{0.8615}          & 0.8491          \\ 
\nmixup~(2.8)        & \multicolumn{1}{c}{{\ul 70.22\%}}    & \multicolumn{1}{c}{{\ul 0.7667}}    & \multicolumn{1}{c:}{{\ul 0.7127}}    & \multicolumn{1}{c}{\textbf{70.92\%}} & \multicolumn{1}{c}{\textbf{0.7667}} & \textbf{0.7176} & \multicolumn{1}{c}{{\ul 83.50\%}}    & \multicolumn{1}{c}{{\ul 0.8731}}    & \multicolumn{1}{c:}{{\ul 0.8459}}    & \multicolumn{1}{c}{85.91\%}          & \multicolumn{1}{c}{{\ul 0.8962}}    & {\ul 0.8765}    \\ 
\nmixup~(4.0)        & \multicolumn{1}{c}{67.88\%}          & \multicolumn{1}{c}{0.7250}          & \multicolumn{1}{c:}{0.6800}          & \multicolumn{1}{c}{67.59\%}          & \multicolumn{1}{c}{{\ul 0.7500}}    & 0.6865          & \multicolumn{1}{c}{\textbf{83.94\%}} & \multicolumn{1}{c}{\textbf{0.8769}} & \multicolumn{1}{c:}{\textbf{0.8514}} & \multicolumn{1}{c}{\textbf{88.16\%}} & \multicolumn{1}{c}{\textbf{0.9115}} & \textbf{0.9008} \\ 
\midrule

\textbf{Dataset}              & \multicolumn{6}{c|}{derm7point: Clinical}                                                                                                                                                 & \multicolumn{6}{c}{derm7point: Dermoscopic}                                                                                                                                                                         \\ 
\#images (\#classes) & \multicolumn{6}{c|}{1,011 (5)}                                                                                                                                                                                        & \multicolumn{6}{c}{1,011 (5)}                                                                                                                                                                                        \\ 
Method               & \multicolumn{3}{c:}{ResNet-18}                                                                                      & \multicolumn{3}{c|}{ResNet-50}                                                                 & \multicolumn{3}{c:}{ResNet-18}                                                                                      & \multicolumn{3}{c}{ResNet-50}                                                                 \\ 
                     & \multicolumn{1}{c}{ACC$_{\mathrm{bal}}$}        & \multicolumn{1}{c}{F1-micro}        & \multicolumn{1}{c:}{F1-macro}        & \multicolumn{1}{c}{ACC$_{\mathrm{bal}}$}        & \multicolumn{1}{c}{F1-micro}        & F1-macro        & \multicolumn{1}{c}{ACC$_{\mathrm{bal}}$}        & \multicolumn{1}{c}{F1-micro}        & \multicolumn{1}{c:}{F1-macro}        & \multicolumn{1}{c}{ACC$_{\mathrm{bal}}$}        & \multicolumn{1}{c}{F1-micro}        & F1-macro        \\ \hdashline 
ERM                  & \multicolumn{1}{c}{42.08\%}          & \multicolumn{1}{c}{0.5297}          & \multicolumn{1}{c:}{0.3797}          & \multicolumn{1}{c}{42.15\%}          & \multicolumn{1}{c}{0.6485}          & 0.4328          & \multicolumn{1}{c}{54.79\%}          & \multicolumn{1}{c}{0.7030}          & \multicolumn{1}{c:}{0.5670}          & \multicolumn{1}{c}{55.46\%}          & \multicolumn{1}{c}{0.7574}          & 0.5819          \\ 
\mixup~               & \multicolumn{1}{c}{46.68\%}          & \multicolumn{1}{c}{0.5941}          & \multicolumn{1}{c:}{0.4392}          & \multicolumn{1}{c}{45.57\%}          & \multicolumn{1}{c}{0.6485}          & 0.4474          & \multicolumn{1}{c}{55.38\%}          & \multicolumn{1}{c}{0.7376}          & \multicolumn{1}{c:}{0.5683}          & \multicolumn{1}{c}{62.08\%}          & \multicolumn{1}{c}{\textbf{0.7772}} & 0.6419          \\ 
\nmixup~(2.4)        & \multicolumn{1}{c}{{\ul 47.82\%}}    & \multicolumn{1}{c}{{\ul 0.6782}}    & \multicolumn{1}{c:}{{\ul 0.4833}}    & \multicolumn{1}{c}{{\ul 46.63\%}}    & \multicolumn{1}{c}{0.6436}          & 0.4239          & \multicolumn{1}{c}{{\ul 55.88\%}}    & \multicolumn{1}{c}{{\ul 0.7525}}    & \multicolumn{1}{c:}{\textbf{0.5914}} & \multicolumn{1}{c}{\textbf{64.59\%}} & \multicolumn{1}{c}{0.7376}          & 0.6406          \\ 
\nmixup~(2.8)        & \multicolumn{1}{c}{\textbf{48.91\%}} & \multicolumn{1}{c}{0.6089}          & \multicolumn{1}{c:}{0.4496}          & \multicolumn{1}{c}{\textbf{48.36\%}} & \multicolumn{1}{c}{{\ul 0.6733}}    & \textbf{0.5122} & \multicolumn{1}{c}{\textbf{56.41\%}} & \multicolumn{1}{c}{\textbf{0.7574}} & \multicolumn{1}{c:}{{\ul 0.5700}}    & \multicolumn{1}{c}{{\ul 62.98\%}}    & \multicolumn{1}{c}{{\ul 0.7624}}    & {\ul 0.6552}    \\ 
\nmixup~(4.0)        & \multicolumn{1}{c}{46.93\%}          & \multicolumn{1}{c}{\textbf{0.7030}} & \multicolumn{1}{c:}{\textbf{0.4902}} & \multicolumn{1}{c}{45.95\%}          & \multicolumn{1}{c}{\textbf{0.6881}} & {\ul 0.4828}    & \multicolumn{1}{c}{55.45\%}          & \multicolumn{1}{c}{0.7178}          & \multicolumn{1}{c:}{0.5618}          & \multicolumn{1}{c}{62.58\%}          & \multicolumn{1}{c}{\textbf{0.7772}} & \textbf{0.6622} \\ 
\midrule

\textbf{Dataset}              & \multicolumn{6}{c|}{PH2}                                                                                                                                                                   & \multicolumn{6}{c}{MED-NODE}                                                                                                                                                                                        \\ 
\#images (\#classes) & \multicolumn{6}{c|}{200 (2)}                                                                                                                                                                                         & \multicolumn{6}{c}{170 (2)}                                                                                                                                                                                         \\ 
Method               & \multicolumn{3}{c:}{ResNet-18}                                                                                      & \multicolumn{3}{c|}{ResNet-50}                                                                 & \multicolumn{3}{c:}{ResNet-18}                                                                                      & \multicolumn{3}{c}{ResNet-50}                                                                 \\ 
                     & \multicolumn{1}{c}{ACC$_{\mathrm{bal}}$}        & \multicolumn{1}{c}{F1-micro}        & \multicolumn{1}{c:}{F1-macro}        & \multicolumn{1}{c}{ACC$_{\mathrm{bal}}$}        & \multicolumn{1}{c}{F1-micro}        & F1-macro        & \multicolumn{1}{c}{ACC$_{\mathrm{bal}}$}        & \multicolumn{1}{c}{F1-micro}        & \multicolumn{1}{c:}{F1-macro}        & \multicolumn{1}{c}{ACC$_{\mathrm{bal}}$}        & \multicolumn{1}{c}{F1-micro}        & F1-macro        \\ \hdashline 
ERM                  & \multicolumn{1}{c}{84.38\%}          & \multicolumn{1}{c}{0.8000}          & \multicolumn{1}{c:}{0.8438}          & \multicolumn{1}{c}{84.38\%}          & \multicolumn{1}{c}{{\ul 0.9000}}    & {\ul 0.8438}    & \multicolumn{1}{c}{75.00\%}          & \multicolumn{1}{c}{{\ul 0.7941}}    & \multicolumn{1}{c:}{0.7589}          & \multicolumn{1}{c}{74.64\%}          & \multicolumn{1}{c}{{\ul 0.7647}}    & 0.7509          \\ 
\mixup~               & \multicolumn{1}{c}{{\ul 85.94\%}}    & \multicolumn{1}{c}{{\ul 0.9250}}    & \multicolumn{1}{c:}{{\ul 0.8769}}    & \multicolumn{1}{c}{{\ul 85.94\%}}    & \multicolumn{1}{c}{0.8500}          & 0.8000          & \multicolumn{1}{c}{80.36\%}          & \multicolumn{1}{c}{{\ul 0.7941}}    & \multicolumn{1}{c:}{0.7925}          & \multicolumn{1}{c}{\textbf{81.79\%}} & \multicolumn{1}{c}{\textbf{0.8235}} & \textbf{0.8179} \\ 
\nmixup~(2.4)        & \multicolumn{1}{c}{{\ul 85.94\%}}    & \multicolumn{1}{c}{{\ul 0.9250}}    & \multicolumn{1}{c:}{{\ul 0.8769}}    & \multicolumn{1}{c}{\textbf{87.50\%}} & \multicolumn{1}{c}{\textbf{0.9500}} & \textbf{0.9134} & \multicolumn{1}{c}{79.29\%}          & \multicolumn{1}{c}{{\ul 0.7941}}    & \multicolumn{1}{c:}{0.7986}          & \multicolumn{1}{c}{{\ul 80.71\%}}    & \multicolumn{1}{c}{\textbf{0.8235}} & {\ul 0.8132}    \\ 
\nmixup~(2.8)        & \multicolumn{1}{c}{\textbf{96.88\%}} & \multicolumn{1}{c}{\textbf{0.9500}} & \multicolumn{1}{c:}{\textbf{0.9283}} & \multicolumn{1}{c}{\textbf{87.50\%}} & \multicolumn{1}{c}{\textbf{0.9500}} & \textbf{0.9134} & \multicolumn{1}{c}{\textbf{82.86\%}} & \multicolumn{1}{c}{\textbf{0.8235}} & \multicolumn{1}{c:}{\textbf{0.8211}} & \multicolumn{1}{c}{\textbf{81.79\%}} & \multicolumn{1}{c}{\textbf{0.8235}} & \textbf{0.8179} \\ 
\nmixup~(4.0)        & \multicolumn{1}{c}{{\ul 85.94\%}}    & \multicolumn{1}{c}{{\ul 0.9250}}    & \multicolumn{1}{c:}{{\ul 0.8769}}    & \multicolumn{1}{c}{\textbf{87.50\%}} & \multicolumn{1}{c}{\textbf{0.9500}} & \textbf{0.9134} & \multicolumn{1}{c}{{\ul 81.79\%}}    & \multicolumn{1}{c}{\textbf{0.8235}} & \multicolumn{1}{c:}{{\ul 0.8179}}    & \multicolumn{1}{c}{{\ul 80.71\%}}    & \multicolumn{1}{c}{\textbf{0.8235}} & {\ul 0.8132}    \\ 
\bottomrule
\end{tabular}%
}
\end{table*}

Due to space constraints, only the balanced accuracy values for SKIN are reported in the paper (Table~\ref{tab:skin_results_narrow}). Table~\ref{tab:skin_results} lists balanced accuracy (ACC$_{\mathrm{bal}}$) and micro- and macro-averaged F1 scores (F1-micro and F1-macro respectively) for all the models trained and evaluated on SKIN.



\end{document}